\documentclass[11pt]{article}

\usepackage[T1]{fontenc}
\usepackage[utf8]{inputenc}
\usepackage{authblk}

\usepackage{graphicx}
\usepackage{multirow}
\usepackage{amsmath,amssymb,amsfonts}
\usepackage{amsthm}
\usepackage{mathrsfs}
\usepackage[title]{appendix}
\usepackage{xcolor}
\usepackage{textcomp}
\usepackage{manyfoot}
\usepackage{booktabs}
\usepackage{algorithm}
\usepackage{algorithmicx}
\usepackage{algpseudocode}
\usepackage{listings}
\usepackage{comment}
\usepackage{bbm}
\usepackage{url}
\usepackage[
  a4paper,
  left=2cm,
  right=2cm,
  top=2.5cm,
  bottom=2.5cm
]{geometry}

\newcommand{\bR}{\mathbb{R}}

\newcommand{\bN}{\mathbb{N}}

\newcommand{\bE}{\mathbb{E}}

\newcommand{\cB}{\mathcal{B}}

\providecommand{\keywords}[1]{%
  \par\medskip\noindent\textbf{\textit{Keywords}:} #1
}

\theoremstyle{plain}
\newtheorem{theorem}{Theorem}
\newtheorem{proposition}[theorem]{Proposition}
\newtheorem{lemma}[theorem]{Lemma}
\newtheorem{corollary}[theorem]{Corollary}

\theoremstyle{definition}
\newtheorem{definition}{Definition}

\theoremstyle{remark}

\title{Distribution of hitting times for dissipative random dynamical systems
on $\mathbb{R}^d$, with application to stochastic gradient descent}

\author[1]{Stefano Galatolo\thanks{\texttt{stefano.galatolo@unipi.it}}}
\author[2]{Stéphane Chrétien\thanks{\texttt{stephane.chretien@univ-lyon2.fr}}}

\affil[1]{Dipartimento di Matematica, Università di Pisa,
Largo Pontecorvo 5, 56127 Pisa, Italy}

\affil[2]{Laboratoire ERIC, Université Lumière Lyon 2,
5 av. Pierre Mendès-France, 69300 Bron, France}

\date{}

\begin{document}

\maketitle


\abstract{Machine Learning and more specifically Deep Learning involves solving large scale nonconvex optimization problems. Several algorithms have been proposed in the literature, that seem to achieve satisfactory practical efficiency for difficult instances, the Stochastic Gradient Method  being the most rudimentary, while still outperforming more recent algorithms at a number of learning tasks.  

A major open question about the current methods used in deep learning is to understand their convergence properties. Following a line of previous works about the long time behavior of gradient-type algorithms,
we present a new approach for studying the asymptotic properties of a wide family of methods from an ergodic theoretical viewpoint. Our main results include a study of the expected time for a stochastic optimisation algorithm to reach a certain small neighborhood of a minimizer and show that this reaching time distributes exponentially around its average, which is given by the inverse of the stationary measure of the target. The assumptions on the Stochastic Gradient noise include the Gaussian and the Sub-Exponential assumptions.}

\keywords{Stochastic Gradient Descent, Lasota Yorke Inequality, Transfer Operator, Spectral Gap, Ergodic Theorem}



\maketitle

\section{Introduction}

Deep Learning is a topic of great impact on science and technology, which drives the increase of potential applications. 
The study of optimization algorithms has been a topic of vastly renewed interest, driven by the needs of efficient loss minimising techniques in the fields of Data Science, Statistics and Machine Learning \cite{wright2022optimization}. In Machine Learning, and especially Deep Learning \cite{lecun2015deep,zhang2023dive,prince2023understanding}, mappings denoted by $\phi_{\theta}$: $\mathbb{R}^{d_x}\mapsto \mathbb R^{d_y}$, parametrised by a high-dimensional vector $\theta\in\mathbb R^{d_\theta}$. An appropriate value for $\theta$, here denoted by $\hat \theta$, is to be "learned from the data", which in practice is achieved by finding $\hat \theta$ as a solution of a minimisation problem of the form 
\begin{align}
    \min_{\theta \in \Theta} \ f(\theta) 
    \label{optmain}
\end{align}
where $f$ is called the empirical loss and can be written as a sum of a very large, up to billions, number of individual loss functions $f_i$ 
\begin{align}
    f(\theta) & = \frac1{n} \sum_{i=1}^n \ f_i(\theta)
    \label{loss}
\hspace{1cm} \text{ with } \hspace{1cm}
    f_i(\theta)  = \ell(y_i,\phi_{\theta}(x_i))
\end{align}
where $\ell$: $\mathbb R^{d_y}\times \mathbb R^{d_y}$, usually convex and twice differentiable in the second variable, is used to measure the discrepancy between each $y_i$ and its approximation using $\phi_{\theta}(x_i)$ for all $i=1,\ldots,n$. Using this optimization approach is motivated by attempting to  optimally replicate a hypothetical relationship between inputs and outputs, based on a finite sample set of observed input-output couples $(x_1,y_1),\ldots,(x_n,y_n)$,  $x_i\in \mathbb R^{d_x}$ and $y_i\in \mathbb R^{d_y}$, $i=1,\ldots,n$.

Since the earlier stages of Deep Learning, the mapping $\phi_\theta$ was a often assumed of the compositional form 
\begin{align}
    \phi_\theta & = \phi^{(L)}_{\theta^{(L)}}\circ \phi^{(L-1)}_{\theta^{(L-1)}} \circ \cdots \circ \phi^{(1)}_{\theta^{(1)}}
    \label{compo}
\end{align}
with $\theta =(\theta^{(L)},\theta^{(L-1)},\ldots,\theta^{(1)})$ and 
\begin{align*}
    \phi^{(l)}_{\theta^{(l)}}(\cdot) & = \sigma(W^{(l)} \cdot +b^{(l)})
\end{align*}
where $\theta^{(l)}=(W^{(l)},b^{(l)})$ with $W^{(l)}\in \mathbb R^{d_{l+1}\times d_l}$, $b^{(l)}\in \mathbb R^{d_{l+1}}$, $\sigma: \mathbb R\mapsto \mathbb R$ is increasing and applies pointwise to vectors. The theory of such compositional mappings is well summarized in \cite{poggio2024compositional}. Recent architectures based on including "skip connections" such as in residual neural networks, or attention layers in transformer architectures also share this compositional structure. In Residual Networks, each block is a mapping $G_k = Id + F_k$, and the network is a composition of these blocks.
Then an $L$-block ResNet is exactly the successive composition
\[
x_L = G_{L-1}\circ G_{L-2}\circ \cdots \circ G_0(x_0)
     = (\mathrm{Id}+F_{L-1})\circ \cdots \circ (\mathrm{Id}+F_0)(x_0).
\]
Many more architectures have been devised for various tasks that are key to the development of AI, such as the ones based on the attention mechanism.

The surprising performance of Deep Machine Learning models for a large number of tasks is one of the most studied mysteries in contemporary science and engineering,  which triggered an unprecedented amount of theoretical and practical research in the fields of theoretical computer science and applied mathematics. Several works have recently appeared to give a broad and deep understanding of the mathematical underpinnings of the efficiency of Deep Learning, and  \cite{Kutyniok2022MathematicalAspects}, \cite{Jentzen2024MathematicalIntroDL} and \cite{spiliopoulosmathematical} provide excellent pedagogical introductions. From a high level perspective, the mathematical study of Deep Learning networks, mostly pertains to three different sub-fields : \begin{itemize}
    \item Optimization theory: 
    \begin{itemize}
        \item Why do simple methods such as stochastic gradient methods, ADAM, MUON, etc, reach  parameter regions associated with good performance in prediction, despite the non-convexity of problem \eqref{loss} ?
        \end{itemize}
    \item Approximation theory: 
    \begin{itemize}
        \item What are the approximation properties of compositional functions of type \eqref{compo} ? 
        \item What is the impact of the number of layers and the number of neurons per layer on the prediction performance of a neural network ? 
    \end{itemize}
    \item Generalisation theory: 
    \begin{itemize}
        \item what are the statistical guarantees on the prediction problem with a new dataset drawn from the same distribution as the training set's distribution ? 
        \item Will the network still perform approximately correctly on new unseen data drawn from a slightly different distribution (distribution shift) ?
    \end{itemize}
    
\end{itemize}
While impressive progress has been made in the last two topics in the last decade, namely in Approximation theory and Generalisation theory, leading to a large number of new findings, theoretical results concerning the performance of Optimization methods are drastically more scarce.

\subsection{The problem of optimising the weights in deep learning}

\subsubsection{The deep learning landscape.}
A parallel line of research has focused on characterising the \emph{loss landscape} of deep neural networks, often described as a high-dimensional ``wilderness'' with numerous saddle points, wide valleys, and flat basins of attraction. Recent empirical and theoretical results suggest that, despite their non-convexity, these landscapes may become surprisingly benign when the width of the network is  large compared to the sample size \cite{liu2022loss,belkin2021fit,wilson2025deep}: most critical points are either saddle points or global minima with similar function values, and the minima found by many optimization algorithms tend to be \emph{flat}, a property saying that the hessian is very small in a large basin, and which is strongly correlated with good generalization \cite{hochreiter1997flat,keskar2017large,neyshabur2017exploring,haddouchepac}. Moreover, studies on mode connectivity reveal that independently trained solutions are often connected by low-loss paths, indicating a rich but structured geometry \cite{garipov2018loss,draxler2018essentially}. Recent results about generalization suspect that the optimization methods also implicitly regularize the problem, converging to solutions that are less complex than the initial formulation of the optimization problem \eqref{loss} would suggest \cite{ma2021sobolev,wilson2025deep}. 

For Transformer-based architectures, a similar picture is emerging. Works such as \cite{ahn2023transformerlandscape,fort2023understanding} analyze the loss surface of large language models and highlight scaling laws governing curvature and gradient variance as the number of parameters grows. 

These findings motivate the use of algorithms that exploit this structure, for instance, adaptive optimisers that seem to outperform all previous standard algorithms. On the theoretical side, mathematically grounded justifications are still lacking for their convergence properties and about the time needed to converge to a satisfying target. 

The goal of the present work is to elucidate some of their performance capabilities by approaching their behavior through the lens of dynamical systems theory, a powerful approach to establishing interesting hitting time results for a wide class of optimization methods.

\subsubsection{Optimization algorithms for non-convex minimization and the probability of approaching the global minimum.}\label{sec:proba}

In practice, solving problem~\eqref{optmain} requires efficient algorithms able to scale to datasets of size $n \ge 10^6$ and parameter vectors $\theta$ with up to billions of entries. A vast literature has therefore emerged on optimization algorithms such as Stochastic Gradient Descent (SGD) and its numerous variants (Momentum, ADAM \cite{kingma2014adam}, RMSProp \cite{kurbiel2017training}, Muon \cite{jordan2024muon}, etc.), which are now standard tools in Deep Learning. Despite their empirical success, these algorithms operate in highly non-convex settings, where convergence guarantees are delicate and often asymptotic. Understanding  their convergence properties  from a quantitative point of view is crucial to explain their empirical performance and guide the design of new, more robust optimisers.

One quantitative way to assess the performance of a given optimization algorithm is to give an estimate of the probability that the algorithm reaches a satisfying result in a given time. 

In this direction, one fruitful approach to analyse the behaviour of stochastic optimization algorithms is to view them as random dynamical systems. From this perspective, the trajectory $(\theta_k)_{k \ge 0}$ generated by several optimization algorithms can be seen as a Markov process evolving in parameter space. Studying hitting times of relevant sets,  for instance, neighbourhoods of critical points or low-loss sublevel sets, provides valuable insight into the speed at which the algorithm explores the loss landscape and escapes saddle points. 

Approaching the problem from a dynamical point of view allows estimating the probability that  the algorithm visit a certain region of the phase space in a certain window of time (e.g. a neighborhood of the global minimum).
Coarser or finer estimates for this probability are  available in the literature and can be approached by different methods. 
The works \cite{azizian2024long,azizian2025global}  combine stochastic approximation theory, dynamical systems methods and large deviation principles to obtain information on the SGD’s behaviour in non-convex landscapes. In \cite{azizian2024long}, they characterise the \emph{long-run distribution} of SGD under constant step size, showing that 
the iterates concentrate exponentially around connected components of critical points.
 They characterize the exponential concentration of the long-run distribution of SGD among the critical components, according to effective energy levels determined by both the objective landscape and the noise statistics.

\subsection{Our approach: the hitting time.}

Beyond estimating the probability that an optimization algorithm lies in a satisfactory region after a long run, it is also important to understand the time needed to first reach such a region, as it will be an estimate for how much time and computational effort is necessary to reach a satisfactory target.

In the context of stochastic optimization, a satisfactory target may be a neighborhood of a minimizer, a low-loss sublevel set, or more generally a region of parameter space associated with good performance. 

One of the motivations of the present work is to relate the first reaching time of a satisfactory target to the stationary probability of that target, in connection with the probabilistic estimates discussed in Section \ref{sec:proba}. When the algorithm is modeled as a random dynamical system with stationary measure \(\mu\), the relevant object is the hitting time
\[
    \tau_A := \inf\{n\geq 1 : X_n \in A\},
\]
where \(A\) is the target region. This naturally connects stochastic optimization with the theory of hitting, or waiting, times in dynamical systems.

In this theory, one aims to estimate quantitatively the time needed for a typical trajectory to enter a small target set. 
A guiding principle, often valid under suitable mixing assumptions, is that the typical time scale for hitting a rare target is the inverse of its stationary measure:
\begin{equation}
    \tau_A \ \text{is typically of order}\ \frac{1}{\mu(A)} .
    \label{eq:inverse-measure-principle}
\end{equation}

More precise results are usually obtained in an asymptotic regime where the target sets shrink. Given a sequence of targets \(A_n\) with \(\mu(A_n)\to 0\), one studies the distribution of \(\tau_{A_n}\) after normalization by \(\mu(A_n)\). 
This approach is asymptotic in the size of the target, but it provides a precise description of the typical scale and distribution of the hitting time. This is the point of view adopted in the present paper.

\

In the machine learning literature, two recent works are particularly related to the estimation of the time needed to reach a certain optimization target.  \cite{JMLR:v21:19-327} study Stochastic Gradient Langevin Dynamics and prove finite-time upper bounds for the first time at which the algorithm reaches approximate stationary points. Their analysis is based on direct descent estimates, in a regime where the gradient dominates the noise and gives explicit dependence on the smoothness of the objective, the dimension, the noise strength, and the step-size schedule. In particular, their first-order result concerns the hitting time of a region of the form
\(
    \{x:\|\nabla F(x)\|\le \varepsilon\}.
\)

A different but closely related perspective is developed in \cite{azizian2025global}, where  the global convergence time of SGD in non-convex landscapes via large-deviation theory is studied. Their goal is to estimate the time needed by SGD, starting from a prescribed initialization, to reach a neighborhood of the global minimizers. In the small-step-size regime, they obtain  estimates showing that this time is governed by an energy quantity associated with the transition graph of the loss landscape. Their results therefore describe a metastable regime in which the dominant cost is the rare transition across the most difficult barriers separating the initial condition from the set of global minimizers.

The results of the present paper are in some sense complementary to these contributions. The estimates of \cite{JMLR:v21:19-327} are optimization-oriented hitting-time bounds where the target set is a low gradient region. In their first-order result, the Langevin noise must be sufficiently small relative to the target accuracy; consequently, this result does not describe the rare-event regime in which the stochastic dynamics is fixed and the target set shrinks. The results of \cite{azizian2025global} address a different asymptotic regime, where the step size plays the role of a small-noise parameter and the relevant time scale is determined by large-deviation costs and barrier crossing between basins.

Here, instead, we fix a dissipative random dynamical system with a stationary measure and study the first entrance time into shrinking targets. In this regime we will show that the relevant scale is obtained from the stationary measure of the target itself. Our main result proves that, for targets \(B_n\) satisfying
\(
    n\mu(B_n)\to \tau,
\)
the hitting time satisfies an exponential law and has mean asymptotic to \(1/\mu(B_n)\) (see Theorem \ref{thm:gen2}).
Thus our approach describes the rare-event statistics of small targets for the stationary dynamics, while the works cited above describe, respectively, finite-time descent to approximate stationary regions and metastable global transitions in the small-noise regime.

It is worth to remark that there exist mixing systems (with relatively slow mixing rate) for which the hitting time of small targets is much bigger, in average and also almost everywhere, than the inverse of the measure of the target (see \cite{GP} for deterministic examples, and   \cite{GSR15}, \cite{G_JEP} for examples which can be interpreted as random rotations). Hence the question we address, from the general point of view is not trivial.

The link between first hitting times of shrinking targets and extreme value laws is now classical and has been developed in a general dynamical
framework by \cite{FreitasFreitasTodd2010}. In the random setting,
exponential laws for hitting times have been obtained for some classes of
systems, in particular for random subshifts of finite type and for random
expanding maps, where one typically has strong mixing properties and
dynamically defined target sets such as cylinders. A representative result
in this direction is due to \cite{RousseauSaussolVarandas2015}, establishing
quenched exponential laws for hitting times in classes of random dynamical
systems with super-polynomial decay of correlations, including random
subshifts and random expanding maps. Closely related results on hitting
times and periodicity in random dynamics, again in a symbolic quenched
setting, were obtained in \cite{RT15}. A theory of rare events for
deterministic and random dynamical systems, with particular emphasis on the
role of periodicity, clustering, and extremal indices, is developed in
\cite{AFV15}. Related results on extreme value statistics for randomly
perturbed dynamical systems can also be found in \cite{FFLTV}, while a
complementary line of work is based on perturbative spectral methods for
transfer operators, as developed by Keller, which provide a powerful route
to exponential hitting-time laws and extremal indices \cite{Keller2012}.
In \cite{FreitasFreitasVaienti2017}, the authors develop limit laws for a class of non-stationary stochastic processes and apply them in particular to sequential dynamical systems generated by uniformly
expanding maps, as well as to some classes of random dynamical systems. 
A direction particularly close in spirit to the present work is the recent
paper \cite{OTHERPAPER}, which studies extreme value laws and Poisson
statistics for discrete-time samplings of stochastic differential equations
on \(\mathbb R^n\). That paper was a direct source of inspiration for our
approach: in both cases, the key mechanism is the spectral analysis of
annealed transfer operators acting on suitable spaces of densities. The two
frameworks are, however, different in scope. In \cite{OTHERPAPER}, the
randomness is generated by a stochastic differential equation,
whereas here we work directly with discrete-time random dynamical systems.
 More importantly, our setting is
designed to accommodate a wider class of random perturbations, including bounded
or unbounded noise and state-dependent noise, which can be considered in suitable formalizations of a random dynamical model of  stochastic gradient descent algorithms.

\section{Main Results}
\label{sec:mainresults}

We introduce a class of random dynamical systems on $\mathbb{R}^d$ having a kind of dissipative behavior, in the sense that their dynamics tend to concentrate typical orbit in a limited region of the space. For this class we will prove exponential hitting time distribution in small targets under the assumption of Gaussian and Sub-Exponential noise. 
We will then see in Section \ref{MR2} that this class contains idealized models of the behavior of the Stochastic Gradient Descent  showing the exponential hitting time distribution for these models.

\subsection{Dissipative Random Dynamical Systems on $\mathbb{R}^d$ and their hitting time distribution.}
\label{MR1}

We introduce a class of random dissipative dynamical systems on \(\mathbb R^d\), for which we will establish an
exponential distribution of hitting times.

To model the randomness in the system, let \((\mathcal E,\pi)\) be a
probability space and let
\[
\Omega:=\mathcal E^{\mathbb N},
\qquad
\mathbb P:=\pi^{\mathbb N}
\]
be the associated shift space. Each \(\omega\in\Omega\) is an infinite
sequence \(\omega=(\omega_i)_{i\in\mathbb N}\) of independent random
variables.

We then consider a family of measurable maps
\[
T_\omega:{\mathbb R}^d\to {\mathbb R}^d,
\qquad \omega\in\mathcal E,
\]
and define the random dynamical system by
\begin{equation}
X_{i+1}=T_{\omega_i}(X_i),
\label{RS-new}
\end{equation}
where \(X_i\in{\mathbb R}^d\). The sequence \(\{X_i\}_{i\ge 0}\) defines the
\emph{random orbit} of the system, which depends on the initial condition
\(X_0\in{\mathbb R}^d\) and on the noise realization \(\omega\in\Omega\).

\begin{definition}
\label{def:koopman-new}
Let \(X_t(z)\) denote the random orbit of \eqref{RS-new} at time \(t\) with
initial condition \(z\in{\mathbb R}^d\). The annealed Koopman operator
\[
 P:L^\infty({\mathbb R}^d)\to L^\infty({\mathbb R}^d)
\]
associated with the system \eqref{RS-new} is defined by
\[
(P\varphi)(z):=\mathbb E[\varphi(X_1(z))].
\]
\end{definition}

From the definition it directly follows that
\begin{equation}
\Vert P\phi \Vert _{\infty }\leq \Vert \phi \Vert _{\infty }.  \label{inft}
\end{equation}

Dually, we can define  the associated annealed transfer operator, acting on densities.
\begin{definition}
\label{def:transfer-new}
The annealed transfer operator
\[
 L : L^1({\mathbb R}^d)\to L^1({\mathbb R}^d)
\]
associated with the system \eqref{RS-new} is defined as the operator dual to
the annealed Koopman operator \( P\), namely by the relation
\begin{equation}
\int_{{\mathbb R}^d} (L f)(z)\,\varphi(z)\,dz
=
\int_{{\mathbb R}^d} f(z)\,(P\varphi)(z)\,dz
\label{eq:dual-transfer}
\end{equation}
for every \(f\in L^1({\mathbb R}^d)\) and every
\(\varphi\in L^\infty({\mathbb R}^d)\).
\end{definition}

Throughout the paper, we make the following assumptions.

\paragraph{Standing assumptions.} We assume that the dynamics admits a certain "deterministic map + variable noise" representation, and furthermore the resulting system is dissipative. 
More precisely we suppose:

\begin{description}

\item[(a)] \textbf{The system has smoothly distributed noise.}

 The dynamics can be written in the form
\begin{equation}
X_{i+1}
= T(X_i)+ P(X_i,\omega_i)
= T_{\omega_i}(X_i),
\label{RS-q}
\end{equation}
where \((\omega_i)_{i\ge 0}\) is an i.i.d.\ sequence,
\( T:{\mathbb R}^d\to{\mathbb R}^d\) is measurable, and
\( P:{\mathbb R}^d\times {\mathcal E}\to {\mathbb R}^d\)
describes the effective  noise. We assume that, for each fixed
\(z\in{\mathbb R}^d\), the random variable
\( P(z,\omega)\) is distributed with a density
\[
\hat\rho_z:{\mathbb R}^d\to\mathbb R,
\]
and that these densities are uniformly \(C^1\), namely
\begin{equation}
\sup_{z\in{\mathbb R}^d}\|\hat\rho_z\|_{C^1}<\infty.
\label{uniform-C1-noise}
\end{equation}

\item[(b)] \textbf{Dissipativity / Foster--Lyapunov condition.}
For \(A>0\) and \(\lambda>0\),  define the exponential Lyapunov Function
\[
V(z):=A e^{\lambda\|z\|}.
\]
We suppose $A$ is chosen in a way that $V(z)\geq 1+||z||^2$ and there exist constants \ \(\kappa\in(0,1)\), \(b>0\), and a
compact set \(K\subset{\mathbb R}^d\) such that
\begin{equation}
{ P} V(z)\le \kappa V(z)+b\,\mathbf 1_K(z)
\qquad\text{for all }z\in{\mathbb R}^d.
\label{FL-new}
\end{equation}

\item[(c)] \textbf{Existence and uniqueness of the stationary measure.}
The system \eqref{RS-new} admits a unique stationary probability measure
\(\mu\) on \({\mathbb R}^d\), absolutely continuous with respect to the Lebesgue measure.

\end{description}

\paragraph{Main results: hitting-time statistics and extreme events for a class of random dynamical systems.}
For a random dynamical system satisfying the above assumptions, we establish
the following theorem, which characterizes the asymptotic distribution of
first hitting times for shrinking target sets.

\begin{theorem}
\label{thm:gen2}
Let \(\tau>0\), and let \(X_i(z)\) denote the random orbit of
\eqref{RS-new} at time \(i\) starting from \(z\in{\mathbb R}^d\), with random
seed \(\omega\in\Omega\).

Suppose the stationary measure \(\mu\) of \eqref{RS-new} has a density
\(f_0\). Let $x_0\in{\mathbb R}^d $ be such that \(f_0(x_0)>0\). Let
$
B_n:=B(x_0,r_n),
$
satisfying
\begin{equation}
n\,\mu(B_n)\longrightarrow \tau.
\label{eq:tau-scaling}
\end{equation}
Then
\begin{equation}
\lim_{n\to\infty}
\mathbb P\otimes\mu\Big(
\{(\omega,z): X_i(z)\notin B_n,\ \forall i \text{ such that }0\le i\le n-1\}
\Big)
=
e^{-\tau}.
\label{eq:main-hitting-law}
\end{equation}
\end{theorem}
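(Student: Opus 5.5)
We will follow the Keller--Liverani perturbative spectral approach, as in \cite{Keller2012} and \cite{OTHERPAPER}. The first step is to rewrite the probability in \eqref{eq:main-hitting-law} as an iterate of an open transfer operator. Let \(L_n f := L(\mathbf 1_{B_n^c} f)\). By the Markov property and the duality \eqref{eq:dual-transfer},
\[
\mathbb P\otimes\mu\big(X_i(z)\notin B_n,\ 0\le i\le n-1\big)=\int \mathbf 1_{B_n^c}\,P\big(\mathbf 1_{B_n^c}P(\cdots)\big)\,f_0\,dz=\int L_n^{n} f_0\,dz .
\]
The problem is therefore reduced to showing that \(\int L_n^n f_0 \to e^{-\tau}\).

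The second step builds a Banach space on which \(L\) has a spectral gap. We take a weighted space of densities, for instance \(\|f\|_{s}=\int |f|\,V\,dz\), together with a stronger norm \(\|f\|_{w}\) given by a weighted \(W^{1,1}\) or BV-type norm with weight \(V\). Assumption (a) is used as follows: \(Lf(y)=\int f(z)\hat\rho_z(y-T(z))\,dz\), and the uniform \(C^1\) bound \eqref{uniform-C1-noise} gives a regularization \(\|Lf\|_{W^{1,1}_{loc}}\lesssim \|f\|_{L^1}\). Assumption (b) is used through duality: it gives \(\int |Lf|V\le \kappa\int|f|V+b\int_K|f|\). Combining the two yields a Lasota--Yorke inequality
\[
\|Lf\|_{strong}\le \kappa\|f\|_{strong}+C\|f\|_{weak}.
\]
We also need compactness of the strong unit ball in the weak norm, which follows from Rellich on compacts plus the tightness supplied by \(V\). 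Quasi-compactness then follows from Hennion's theorem. Assumption (c), together with aperiodicity (the noise has positive density near \(T(z)\)), gives a simple isolated eigenvalue \(1\) with eigenvector \(f_0\) and no other peripheral spectrum.

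The third step applies Keller--Liverani perturbation theory to \(L_n\). We must check three things:
\begin{enumerate}
\item \(L_n\) satisfies the same Lasota--Yorke inequality uniformly in \(n\). Here multiplication by \(\mathbf 1_{B_n^c}\) is harmless in the \(V\)-weighted \(L^1\) norm, so the strong norm must be chosen with this in mind.
\item The triple norm satisfies \(|||L-L_n||| := \sup_{\|f\|_{strong}\le1}\|L(\mathbf 1_{B_n}f)\|_{weak}\le C\mu_{Leb}(B_n)^{\alpha}\to0\).
\item The quantities \(\Delta_n=\int \mathbf 1_{B_n}L f_0\cdots\) satisfy \(\eta_n\Delta_n\to0\).
\end{enumerate}
Given these, the leading eigenvalue satisfies \(\lambda_n=1-\theta\,\mu(B_n)(1+o(1))\), where the extremal index is
\[
\theta=1-\sum_{k\ge0}q_k,\qquad q_k=\lim_n \frac{\mu\big(B_n\cap T^{-1}B_n^c\cap\cdots\cap T^{-(k+1)}B_n\big)}{\mu(B_n)} .
\]
We must show \(q_k=0\). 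Since the image of a point is spread by a bounded density \(\hat\rho\), the probability of returning to \(B_n\) in \(k+1\) steps starting from \(B_n\) is at most \(\sup\|\hat\rho\|_\infty\operatorname{Leb}(B_n)\to0\), uniformly in the iterates. Continuity of \(f_0\) at \(x_0\) (from regularity of \(Lf_0=f_0\)) with \(f_0(x_0)>0\) gives \(\mu(B_n)\asymp \operatorname{Leb}(B_n)\), so indeed \(q_k=0\) and \(\theta=1\).

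The final step is the conclusion. We have \(L_n^n f_0=\lambda_n^n\Pi_n f_0+Q_n^n f_0\), where \(\|Q_n^n\|\) decays exponentially uniformly in \(n\) and \(\int\Pi_n f_0\to1\). Hence
\[
\int L_n^nf_0\to \lim(1-\mu(B_n))^n=e^{-\tau}
\]
by \eqref{eq:tau-scaling}.

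The main obstacle is the second step, together with the uniformity in the third step. We need a pair of norms on the non-compact space \(\mathbb R^d\) for which the Lasota--Yorke inequality holds using only the weak, dual-form Foster--Lyapunov condition \eqref{FL-new}, the weak ball is compact, and cutting by \(\mathbf 1_{B_n^c}\) stays bounded. Our first attempt would take the strong norm to be \(\|f\|_{L^1(V)}\) plus a weighted \(L^\infty\) bound on \(Lf\), and the weak norm to be \(\|f\|_{L^1}\). This choice puts the indicator cut inside \(L\), where the smoothing from (a) absorbs the discontinuity.
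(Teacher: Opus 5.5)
Your skeleton matches the paper's. You rewrite the probability as $\int L_n^n f_0$ and apply Keller--Liverani perturbation theory to the open operators. You get $q_k=0$ because one step of the chain puts mass at most $\sup_z\|\hat\rho_z\|_\infty\,\mathrm{Leb}(B_n)$ on $B_n$, hence $\theta=1$ and $\lambda_n^n\to e^{-\tau}$. Your ordering $L_n=L(\mathbf 1_{B_n^c}\,\cdot)$ is legitimate: it reproduces the indices $0,\dots,n-1$ without invoking stationarity, unlike the paper's $\mathbf 1_{B_n^c}L$, and it makes the uniform Lasota--Yorke inequality immediate.

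The genuine gap is the step you yourself call the main obstacle. You never produce a strong space that works, and the candidates you sketch fail.
\begin{itemize}
\item Assumption (a) gives only the sup-norm bound $\|Lf\|_{C^1}\le M\|f\|_{L^1}$. This controls neither $\int|\nabla Lf|\,dy$ nor $\int V|\nabla Lf|\,dy$. So no Lasota--Yorke inequality follows for a BV or $W^{1,1}$ norm taken with respect to Lebesgue measure or weighted by $V$. A $W^{1,1}_{loc}$ bound does not define a global norm.
\item Your ``first attempt'' $\|f\|_{L^1(V)}+\|Lf\|_{L^\infty}$ (weighted) sees $f$ only through the already smoothed $Lf$, so it imposes no regularity on $f$. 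The functions $f_k=(1+\sin(kx_1))\mathbf 1_Q$, with $Q$ a unit cube, are bounded in it (since $0\le Lf_k\le 2L\mathbf 1_Q$) but have no $L^1$-convergent subsequence. Hence neither Hennion's theorem nor Keller--Liverani applies.
\item The same norm gives no $L^\infty$ control of $f$ near $x_0$: small multiples of $\mathbf 1_{B_n}/\mathrm{Leb}(B_n)$ stay bounded in it. So with your ordering, $|||L-L_n|||=\sup\|L(\mathbf 1_{B_n}f)\|_{L^1}$ does not tend to $0$.
\end{itemize}

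The missing idea is the paper's space $BV_{A,\lambda}$. The exponential weight sits only on the $L^1$ part, where (b) controls it by duality. Regularity is measured by $\sup_{\epsilon}\epsilon^{-1}\int\operatorname{osc}(f,B_\epsilon(x))\,d\psi(x)$ against a \emph{finite} measure $\psi$ with positive continuous density. Finiteness of $\psi$ gives $|Lf|_{osc}\le 2\|\nabla Lf\|_\infty\le C\|f\|_{L^1}$ from (a) alone. The norm also yields compactness in $L^1$ (Theorem~\ref{thm:embeddingRD}) and local sup bounds (Proposition~\ref{imm}).

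With your ordering you need exactly those local sup bounds to get $|||L-L_n|||$ and $\eta_n$ of order $\mathrm{Leb}(B_n)$. The paper's ordering gets this smallness for free from $\|Lf\|_\infty\le M\|f\|_{L^1}$, and pays instead with the hole-oscillation estimate of Lemma~\ref{lem:cylinder-hole-osc}.

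Two smaller points. First, the Keller--Liverani condition is $\eta_n\|(L-L_n)f_0\|_{strong}\le C\Delta_n$ with $\Delta_n=\mu(B_n)$, not the vacuous $\eta_n\Delta_n\to0$. It holds here because $\eta_n=O(\mathrm{Leb}(B_n))$, $\|(L-L_n)f_0\|_{strong}$ is bounded, and $f_0=Lf_0$ is continuous with $f_0(x_0)>0$. Second, positivity of the noise density near $0$, which you invoke for aperiodicity, is not part of (a). Without it, (c) gives only simplicity of the eigenvalue $1$, which is also all the paper checks.
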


The statement of Theorem~\ref{thm:gen2} can also be reformulated in the
language of hitting-time distributions as follows.

\begin{corollary}
\label{cor:exp-law}
Assume the hypotheses and notation of Theorem~\ref{thm:gen2}, and let
\((B_n)_{n\ge 1}\) be the corresponding sequence of target sets. For each \(n\), define the hitting time
\[
\tau_{B_n}(\omega,z):=\inf\{k\ge 1: X_k(z)\in B_n\}.
\]
Then, with respect to \(\mathbb P\otimes\mu\), the rescaled hitting times
\(\mu(B_n)\tau_{B_n}\) converge in distribution to an exponential random
variable of parameter \(1\). Equivalently, for every \(t\ge 0\),
\begin{equation}
\lim_{n\to\infty}
(\mathbb P\otimes\mu)\!\left(
\tau_{B_n}>\frac{t}{\mu(B_n)}
\right)
=
e^{-t}.
\label{eq:exp-law}
\end{equation}
\end{corollary}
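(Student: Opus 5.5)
The statement to prove is Corollary~\ref{cor:exp-law}, which should follow directly from Theorem~\ref{thm:gen2}. The plan is to reduce the claim at a fixed \(t>0\) to an application of the theorem along a reparametrised sequence of radii, plus two small corrections: the shift between ``\(0\le i\le n-1\)'' and ``\(k\ge 1\)'', and the integer rounding.

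\textbf{Step 1 (the case \(t=0\) and the reduction).} For \(t=0\) the statement is trivial, since \(\tau_{B_n}\ge 1>0\). So fix \(t>0\). Because \(f_0(x_0)>0\), the quantity \(\mu(B_n)\) varies continuously and monotonically in \(r_n\), with \(\mu(B(x_0,r))\to 0\) as \(r\to 0\). Set \(m_n:=\lfloor t/\mu(B_n)\rfloor\). Then
\[
\{\tau_{B_n}>t/\mu(B_n)\}=\{X_k(z)\notin B_n,\ 1\le k\le m_n\}.
\]
This uses that \(B_n\) is a fixed sequence of balls with \(\mu(B_n)\to0\), which the hypothesis \(n\mu(B_n)\to\tau\) guarantees.

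\textbf{Step 2 (shift invariance).} The orbit \(\{X_k(z)\}_{k=1}^{m}\) under \(\mathbb P\otimes\mu\) has the same law as \(\{X_i(z)\}_{i=0}^{m-1}\). This holds because \(\mu\) is stationary, so \(X_1\) is \(\mu\)-distributed, and the noise is i.i.d., so \((\omega_i)_{i\ge1}\) is independent of \(X_1\) and has law \(\mathbb P\). Hence
\[
(\mathbb P\otimes\mu)(\tau_{B_n}>m_n)=(\mathbb P\otimes\mu)\bigl(X_i\notin B_n,\ 0\le i\le m_n-1\bigr).
\]

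\textbf{Step 3 (applying the theorem along a new sequence).} Define a new target sequence indexed by \(m\). For each \(m\), choose a radius \(\tilde r_m\) with \(m\,\mu(B(x_0,\tilde r_m))\to t\), for example by picking \(\tilde r_m\) so that \(\mu(B(x_0,\tilde r_m))=t/m\), which is possible by continuity of the radius-to-measure map for small radii. Theorem~\ref{thm:gen2}, applied with \(\tau=t\), gives
\[
(\mathbb P\otimes\mu)\bigl(X_i\notin \tilde B_m,\ 0\le i\le m-1\bigr)\to e^{-t}.
\]
Now compare \(B_n\) with \(\tilde B_{m_n}\). Since \(m_n\mu(B_n)\to t\), the two balls differ by a set \(\tilde B_{m_n}\triangle B_n\) of \(\mu\)-measure \(o(1/m_n)\). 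A union bound over the \(m_n\) time steps, using stationarity at each step, shows that replacing one ball by the other changes the probability by at most \(m_n\,\mu(\tilde B_{m_n}\triangle B_n)=o(1)\). This yields~\eqref{eq:exp-law}. Convergence of the distribution functions at every \(t\ge0\) is exactly convergence in distribution to \(\mathrm{Exp}(1)\).

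\textbf{Main obstacle.} The only delicate point is Step 3, namely making sure the theorem can be applied to an arbitrary scaling constant \(t\) along the subsequence \(m_n\). The balls are nested at a common centre, so their measures are ordered, and the symmetric difference has measure \(\bigl|\mu(B_n)-\mu(\tilde B_{m_n})\bigr|\le |t-m_n\mu(B_n)|/m_n+O(1/m_n^2)=o(1/m_n)\). This makes the comparison clean, so I expect no real difficulty beyond the bookkeeping.
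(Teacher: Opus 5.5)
Your argument is correct. It does more than the paper, which presents Corollary~\ref{cor:exp-law} as a reformulation of Theorem~\ref{thm:gen2} and gives no proof.

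Strictly speaking it is not a mere reformulation. The theorem ties the time horizon $n$ to the scaling $n\mu(B_n)\to\tau$, while the corollary asks about the horizon $t/\mu(B_n)$ for arbitrary $t\ge0$. A direct union-bound comparison of the two horizons costs about $|t-n\mu(B_n)|\to|t-\tau|$, which is not small when $t\neq\tau$.

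Your three ingredients are exactly what is needed:
\begin{itemize}
\item integer rounding;
\item the shift from $k\ge1$ to $0\le i\le m-1$ by stationarity;
\item auxiliary concentric balls with $\mu(\tilde B_m)=t/m$, compared to $B_n$ at cost $m_n\,\mu(B_n\triangle\tilde B_{m_n})=|m_n\mu(B_n)-t|<\mu(B_n)\to0$.
\end{itemize}
One small point: continuity of $r\mapsto\mu(B(x_0,r))$ comes from the absolute continuity of $\mu$, not from $f_0(x_0)>0$. This changes nothing in the argument.

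The alternative would be to reopen the spectral proof. The decomposition \eqref{eq:naction} holds for any number of iterates, so with $m_n$ iterates one gets $\lambda_n^{m_n}=\exp\bigl(-m_n\mu(B_n)(1+o(1))\bigr)\to e^{-t}$ directly. Moreover, by Proposition~\ref{p5}, $\int L_n^{m}f_0\,dx$ is the probability of avoiding $B_n$ at times $1,\dots,m$, so no index shift is needed. That route is shorter but relies on the internal estimates of the proof.

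Yours uses only the statement of the theorem for every $\tau>0$ together with stationarity. As a by-product, it shows the exponential law holds for any concentric balls with $\mu(B_n)\to0$, regardless of the particular scaling $n\mu(B_n)\to\tau$.
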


A standard consequence of the exponential law above is the following.

\begin{corollary}
\label{cor:mean-hitting}
Under the hypotheses of Theorem~\ref{thm:gen2}, the mean hitting time to
\(B_n\) satisfies
\begin{equation}
\lim_{n\to\infty}
\mu(B_n)\,
\mathbb E_{\mathbb P\otimes\mu}\big[\tau_{B_n}\big]
=
1.
\label{eq:mean}
\end{equation}
In particular,
\[
\mathbb E_{\mathbb P\otimes\mu}\big[\tau_{B_n}\big]
\sim
\frac{1}{\mu(B_n)}
\qquad\text{as }n\to\infty.
\]
\end{corollary}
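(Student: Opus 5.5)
The statement to prove is Corollary~\ref{cor:mean-hitting}. The natural route is to integrate the tail law of Corollary~\ref{cor:exp-law} and justify exchanging limit and integral.

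\textbf{Step 1: the layer-cake formula.} Write $\mu_n:=\mu(B_n)$ and $\tau_n:=\tau_{B_n}$. Then
\[
\mu_n\,\mathbb E_{\mathbb P\otimes\mu}[\tau_n]=\int_0^\infty (\mathbb P\otimes\mu)\big(\mu_n\tau_n>t\big)\,dt .
\]
By Corollary~\ref{cor:exp-law} the integrand converges pointwise to $e^{-t}$, and $\int_0^\infty e^{-t}\,dt=1$. It therefore suffices to produce an integrable dominating function, uniform in $n$; dominated convergence then finishes the proof.

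\textbf{Step 2: domination via block decorrelation.} Fix a large integer $k$ and set $m_n:=\lceil k/\mu_n\rceil$. The aim is a sub-multiplicative estimate of the form
\[
(\mathbb P\otimes\mu)(\tau_n>jm_n)\le \big(q_k+\varepsilon_n\big)^j \quad\text{for all } j,
\]
with $q_k<1$ and $\varepsilon_n\to0$. The proposed steps are as follows.

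\begin{enumerate}
\item Use the Markov property to write
\[
(\mathbb P\otimes\mu)(\tau_n>jm_n)=\int (P\mathbf 1_{B_n^c})^{\circ m_n}\cdots\,d\mu,
\]
that is, an iterate of the twisted operator $L_n f:=L(\mathbf 1_{B_n^c}f)$ applied to $f_0$ and integrated.
\item Invoke the spectral gap for $L$ (and its stability under the small perturbation $L\to L_n$, à la Keller--Liverani), which the paper establishes from assumptions (a)--(c) through the Lasota--Yorke inequality.
\item Conclude that $L_n$ has a leading eigenvalue $\lambda_n=1-\mu_n(1+o(1))$, with the rest of the spectrum uniformly inside a disc of radius $r<1$.
\item Deduce $\|L_n^{N}f_0\|_{L^1}\le C\lambda_n^{N}$ uniformly in $n$.
\end{enumerate}

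This directly gives $(\mathbb P\otimes\mu)(\tau_n>N)\le C(1-c\mu_n)^N\le Ce^{-c\mu_n N}$ for some $c>0$ and all large $n$. Hence the integrand is bounded by $\min(1,Ce^{-ct})$, which is integrable.

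\textbf{Step 3: off-by-one conventions.} Corollary~\ref{cor:exp-law} uses $k\ge1$, whereas Theorem~\ref{thm:gen2} uses $0\le i\le n-1$. The two differ by the event $\{z\in B_n\}$, which has measure $\mu_n\to0$. This does not affect the tail bound, since it only shifts $N$ by one.

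\textbf{Main obstacle.} The key difficulty is the uniform exponential tail bound of Step 2. Pointwise convergence alone does not control the mean: heavy tails, or orbits starting far out in $\mathbb R^d$, could escape. The plan is to get this bound from the uniform perturbative spectral estimates, presumably already used in proving Theorem~\ref{thm:gen2} via the $\lambda_n$ expansion.

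If only weaker operator bounds are available, the fallback is a direct blocking argument. Decompose $[0,N]$ into blocks of length $m_n$ separated by gaps of length $g$. Use exponential mixing in the weighted norm given by the Foster--Lyapunov function $V$ of~\eqref{FL-new}; since $V$ is $\mu$-integrable, this gives uniform decorrelation of the events "no hit in block $j$". Each block then has conditional non-hit probability at most $e^{-k}+o(1)<1/2$, which yields geometric decay in $j$ and again an integrable dominating function.
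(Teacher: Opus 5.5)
Your proof is correct, and it does more than the paper does. The paper gives no proof of \eqref{eq:mean}; it only calls it ``a standard consequence of the exponential law.'' Convergence in distribution of $\mu(B_n)\tau_{B_n}$ to an $\mathrm{Exp}(1)$ variable does not by itself give convergence of the means, because some uniform integrability is needed. Your layer-cake formula plus dominated convergence is exactly what fills this gap. The majorant you need is already contained in the proof of Theorem~\ref{thm:gen2}:
\begin{itemize}
\item By Proposition~\ref{p5}, $(\mathbb P\otimes\mu)(\tau_{B_n}>N)=\int L_n^N f_0\,dx$.
\item By \eqref{eq:naction}, this equals $\lambda_n^N\bigl[\langle\mu_n,f_0\rangle+\int Q_n^N f_0\,dx\bigr]$. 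The bracket is bounded uniformly in $N$ and in large $n$, since $\langle\mu_n,f_0\rangle\to 1$ and $\sup_n\|Q_n^N\|$ is summable by \eqref{qqcc}.
\item By \eqref{eq:leadingperturb}, $|\lambda_n|\le 1-\mu(B_n)/2$ for large $n$.
\end{itemize}
Hence $(\mathbb P\otimes\mu)(\tau_{B_n}>N)\le Ce^{-\mu(B_n)N/2}$. Rounding $t/\mu(B_n)$ down to an integer then gives the integrable majorant $Ce^{1/2}e^{-t/2}$. Your items 1--4 in Step 2 already produce this bound, so the $m_n$-blocking and the mixing fallback are superfluous.

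One small correction to Step 3: the two conventions do not merely differ by $O(\mu(B_n))$. Under $\mathbb P\otimes\mu$ the process is stationary, so $(X_1,\dots,X_n)$ and $(X_0,\dots,X_{n-1})$ have the same law, and the two probabilities coincide exactly.
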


{In Section \ref{MR2} we will apply these abstract results to a model of machine learning optimization algorithms: the Stochastic Gradient Descent.

\paragraph{Strategy of proof of the main result.}  
The proof of Theorem~\ref{thm:gen2} will be detailed in Section \ref{app:repfo}, but we  outline the main ideas here. The proof relies on spectral perturbation techniques following Keller and Liverani~\cite{kellerliverani1999,Keller2012}.  
    The key idea is to represent the evolution of densities under the dynamics via the associated transfer operator ${L}$, identify Banach spaces on which ${L}$ satisfies a Lasota–Yorke inequality, and exploit the existence of a spectral gap.  
This spectral decomposition enables precise control of rare events and the distribution of hitting times.

\subsection{Application to Stochastic Gradient Descent}\label{MR2}

We now discuss how the abstract framework developed above applies to a
simple stochastic gradient descent (SGD) model on \(\mathbb R^d\).
We will see this stochastic optimisation algorithm as a random dynamical system.

Let
\[
f:\mathbb R^d\to\mathbb R
\]
be a differentiable function, to be interpreted as an empirical loss or
objective function. In practical optimisation algorithms, the gradient
\(\nabla f(x)\) is typically not evaluated exactly at each iteration. Instead,
one uses a stochastic approximation, for instance obtained from minibatch
sampling. This naturally leads to a random iteration, in which the
deterministic gradient step is perturbed by a random fluctuation.

From this point of view, stochastic gradient descent can be viewed as a
random dynamical system on the phase space \(\mathbb R^d\), where the
deterministic part is given by the gradient update and the randomness models
the fluctuation of the stochastic gradient around its mean value.

\paragraph{A simple additive-noise model for SGD.}
Fix a step size \(\delta>0\). We consider the deterministic map
\begin{equation}
T(x):=x-\delta \nabla f(x),
\label{eq:SGD-map}
\end{equation}
which corresponds to one step of gradient descent with step size \(\delta\).
We then perturb this map by an additive random term and define the random
iteration
\begin{equation}
X_{n+1}=T(X_n)+\xi_{n+1},
\label{eq:SGD-random}
\end{equation}
where \((\xi_n)_{n\ge 1}\) is an i.i.d.\ sequence of random vectors in
\(\mathbb R^d\).

This is a random dynamical system of the form considered in the present
paper,
\[
X_{n+1}=T(X_n)+P(X_n,\omega_n),
\]
with
\[
P(x,\omega)=\omega.
\]

\paragraph{A dissipativity condition on the gradient.}
Assume that \(f\in C^1(\mathbb R^d)\) and that there exist constants
\(C_0>0\) and \(R_0>0\) such that
\begin{equation}
\langle \nabla f(x),x\rangle \ge C_0\|x\|
\qquad\text{for all }x\in\mathbb R^d\text{ with }\|x\|\ge R_0.
\label{eq:SGD-gradient-diss}
\end{equation}
Then the deterministic gradient map is dissipative. Indeed,
\[
\|T(x)\|
=
\|x-\delta \nabla f(x)\|
\le
\|x\|-\delta\,\frac{\langle \nabla f(x),x\rangle}{\|x\|}
\le
\|x\|-\delta C_0
\]
for every \(\|x\|\ge R_0\). Hence there exists \(C>0\) such that
\begin{equation}
\|x\|\ge R_0
\quad\Longrightarrow\quad
\|T(x)\|\le \|x\|-C.
\label{eq:SGD-map-diss}
\end{equation}
Moreover, since \(f\in C^1(\mathbb R^d)\), the map \(T\) is continuous and
therefore bounded on bounded sets.

\paragraph{Gaussian  or sub-Gamma noise perturbations.}

To fix a concrete example, assume now that the stochastic perturbation in \eqref{eq:SGD-random} is
Gaussian, namely
\[
\xi_n\sim \mathcal N(0,\sigma^2 I_d),
\qquad \sigma>0,
\]
independently and identically distributed. In this case the random map can
be written as
\[
T_\omega(x)=T(x)+\omega,
\qquad \omega\in\mathbb R^d,
\]
and the noise density is the Gaussian kernel
\[
\rho_\sigma(z)=\frac{1}{(2\pi\sigma^2)^{d/2}}
\exp\!\left(-\frac{\|z\|^2}{2\sigma^2}\right),
\]
which is \(C^\infty\), hence in particular uniformly \(C^1\). Our theory also extends to the sub-Gamma family of distributions, as detailed in the Appendix Section \ref{fosterlyapunov}. 

Therefore, if \eqref{eq:SGD-gradient-diss} holds, then Proposition \ref{prop:FL-exp-d} and Proposition \ref{prop:FL-exp-d-subgamma} apply to the present SGD
model, establishing an exponential
Foster-Lyapunov drift condition as required at Assumption (b).

\paragraph{Existence and uniqueness of the stationary density.}
In the Gaussian case, the Markov kernel of \eqref{eq:SGD-random} has the
strictly positive smooth density
\[
k(x,y)=\rho_\sigma(y-T(x)),
\qquad x,y\in\mathbb R^d.
\]
Hence the chain is irreducible and strong Feller. Combined with the
Foster--Lyapunov drift established above, this implies the existence of a
unique stationary probability measure \(\mu\). Moreover, \(\mu\) is
absolutely continuous with respect to Lebesgue measure, with density in the
relevant weighted space.

Hence all the assumptions of Theorem \ref{thm:gen2} are satisfied
for this stochastic gradient descent model.
This gives the following consequence:
\begin{corollary}[Exponential hitting-time statistics for stochastic gradient descent]
\label{cor:SGD-hitting-times}
Let \(f:\mathbb R^d\to\mathbb R\) be of class \(C^1\), and fix \(\delta>0\).
Consider the stochastic gradient descent model
\[
X_{n+1}=X_n-\delta \nabla f(X_n)+\xi_{n+1},
\]
where \((\xi_n)_{n\ge 1}\) is an i.i.d.\ sequence of Gaussian random vectors
with law \(\mathcal N(0,\sigma^2 I_d)\), \(\sigma>0\).

Assume that there exist constants \(C_0>0\) and \(R_0>0\) such that
\[
\langle \nabla f(x),x\rangle \ge C_0\|x\|
\qquad\text{for all }x\in\mathbb R^d\text{ with }\|x\|\ge R_0.
\]
Let \(\mu\) denote the unique stationary probability measure of the system,
and let \(x_0\in\mathbb R^d\) be such that the stationary density of \(\mu\)
is positive at \(x_0\). Let
\[
B_n:=B(x_0,r_n)
\]
be a sequence of balls with \(r_n\to 0\) such that
\[
n\,\mu(B_n)\longrightarrow \tau>0.
\]
Then
\[
\lim_{n\to\infty}
(\mathbb P\otimes\mu)\bigl(\tau_{B_n}>n\bigr)
=
e^{-\tau},
\]
where
\[
\tau_{B_n}(\omega,x):=\inf\{k\ge 1:\ X_k(\omega,x)\in B_n\}.
\]

Equivalently, for every \(t\ge 0\),
\[
\lim_{n\to\infty}
(\mathbb P\otimes\mu)\!\left(
\tau_{B_n}>\frac{t}{\mu(B_n)}
\right)
=
e^{-t}.
\]
In other words, the rescaled hitting times \(\mu(B_n)\tau_{B_n}\) converge
in distribution to an exponential random variable of parameter \(1\).
\end{corollary}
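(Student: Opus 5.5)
The corollary follows by checking the standing assumptions (a)--(c) for the SGD model and then applying Theorem~\ref{thm:gen2} and Corollary~\ref{cor:exp-law}.

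\emph{Assumption (a).} The model has $T(x)=x-\delta\nabla f(x)$, which is continuous and hence measurable, and $P(x,\omega)=\omega$. The noise density $\hat\rho_z=\rho_\sigma$ therefore does not depend on $z$. Since $\rho_\sigma$ is a Gaussian, it and its gradient are bounded, so $\sup_z\|\hat\rho_z\|_{C^1}=\|\rho_\sigma\|_{C^1}<\infty$.

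\emph{Assumption (b).} We use the dissipativity estimate \eqref{eq:SGD-map-diss}: $\|T(x)\|\le\|x\|-C$ for $\|x\|\ge R_0$, and $T$ is bounded on $\{\|x\|\le R_0\}$. This is exactly the input needed for Proposition~\ref{prop:FL-exp-d}, which gives the exponential Foster--Lyapunov drift \eqref{FL-new}.

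The mechanism behind it is as follows. For $\|x\|\ge R_0$,
\[
PV(x)=A\,\mathbb E\, e^{\lambda\|T(x)+\xi\|}\le A e^{\lambda(\|x\|-C)}\,\mathbb E\, e^{\lambda\|\xi\|}.
\]
Because the Gaussian has all exponential moments, we can choose $\lambda$ small enough that $\kappa:=e^{-\lambda C}\mathbb E e^{\lambda\|\xi\|}<1$. On the compact set $K=\{\|x\|\le R_0\}$, boundedness of $T$ makes $PV$ bounded, which supplies the constant $b$. Finally, enlarging $A$ ensures $V\ge 1+\|z\|^2$.

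\emph{Assumption (c).} The transition kernel $k(x,y)=\rho_\sigma(y-T(x))$ is strictly positive and continuous. Hence the chain is Lebesgue-irreducible and aperiodic, every compact set is small, and the chain is strong Feller. Together with the drift condition, standard Harris/Meyn--Tweedie theory gives a unique invariant probability $\mu$. Absolute continuity follows from stationarity: $\mu=\mu P$, so $\mu$ has density $f_0(y)=\int\rho_\sigma(y-T(x))\,d\mu(x)$. This density is smooth and, in fact, positive everywhere, so the hypothesis $f_0(x_0)>0$ holds for any $x_0$.

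\emph{Conclusion.} With (a)--(c) verified, Theorem~\ref{thm:gen2} applies to $B_n=B(x_0,r_n)$ with $n\mu(B_n)\to\tau$. It gives
\[
\lim_{n\to\infty}(\mathbb P\otimes\mu)\bigl(X_i\notin B_n,\ 0\le i\le n-1\bigr)=e^{-\tau}.
\]
The corollary is stated with $\tau_{B_n}>n$, i.e.\ with the index range $1\le k\le n$, which is shifted by one step from the theorem's $0\le i\le n-1$. The two events differ by a set of measure at most $2\mu(B_n)\to0$, using stationarity of $\mu$. Hence the first limit in the corollary holds.

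The rescaled form is exactly Corollary~\ref{cor:exp-law}. Alternatively, apply Theorem~\ref{thm:gen2} along a rescaled sequence $m=\lfloor t/\mu(B_n)\rfloor$; since $f_0(x_0)>0$, the measure $\mu(B(x_0,r))$ is continuous and strictly increasing in $r$ near $0$, so the radii can be matched.

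\emph{Main obstacle.} The only genuinely non-routine step is the drift condition (b). It requires tuning $\lambda$ and $A$ jointly, and the constant $C$ must be uniform for $\|x\|\ge R_0$. This is handled by Proposition~\ref{prop:FL-exp-d}, which we invoke rather than reprove.
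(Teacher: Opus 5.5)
Your overall route is the paper's: verify (a)--(c) for $T(x)=x-\delta\nabla f(x)$ with Gaussian noise, then invoke Theorem~\ref{thm:gen2} and Corollary~\ref{cor:exp-law}. Your treatment of (a), of the density and positivity of $f_0$, of the index shift, and of the rescaled law is fine; by stationarity the two shifted events in fact have equal probability.

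\textbf{The gap is in (b).} The estimate \eqref{eq:SGD-map-diss} you take as input does not follow from $\langle\nabla f(x),x\rangle\ge C_0\|x\|$. Indeed $\|T(x)\|^2=\|x\|^2-2\delta\langle\nabla f(x),x\rangle+\delta^2\|\nabla f(x)\|^2$, and nothing in the hypotheses controls the last term. The bound $\|x-\delta\nabla f(x)\|\le\|x\|-\delta\langle\nabla f(x),x\rangle/\|x\|$, used in the text just before \eqref{eq:SGD-map-diss}, holds only when $\nabla f(x)$ is parallel to $x$ and $\delta\langle\nabla f(x),x\rangle\le\|x\|^2$. So the paper's derivation has the same gap, and citing it does not close yours. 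Counterexamples:
\begin{itemize}
\item For $f(x)=\|x\|^2$ and $\delta\ge1$ one gets $T(x)=(1-2\delta)x$, so $\|T(x)\|\ge\|x\|$. By Jensen, $PV(x)\ge Ae^{\lambda\|T(x)\|}\ge V(x)$ for all $x$, so \eqref{FL-new} fails for every choice of $A,\lambda$, $\kappa<1$ and compact $K$. Moreover the Gaussian part of $X_n$ has covariance at least $n\sigma^2I_d$, so there is no stationary probability at all, and your step (c) collapses as well.
\item The same happens for $f(x)=\|x\|^4$ and every $\delta>0$, since $\|T(x)\|=\|x\|\,\bigl|1-4\delta\|x\|^2\bigr|$.
\item Even if you read the existence of $\mu$ as a hypothesis, (b) can still fail. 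In $d=1$, take $f'(x)=C_0+3x/\delta$ for $x\ge R_0$ and $f'(x)=-C_0$ for $x\le-R_0$. Then $T(x)=-2x-\delta C_0$ on the right, so $PV\gg V$ there. Yet the chain is positive recurrent: use $W(x)=|x|$ for $x\le0$ and $W(x)=3x$ for $x>0$.
\end{itemize}
For this route you need an extra hypothesis controlling the size of the gradient. For instance, $\delta\|\nabla f(x)\|^2\le\langle\nabla f(x),x\rangle$ for $\|x\|\ge R_0$ suffices; a bounded gradient implies it for large $\|x\|$. It gives $\|T(x)\|^2\le\|x\|^2-\delta C_0\|x\|$, hence $\|T(x)\|\le\|x\|-\delta C_0/2$.

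\textbf{The drift sketch is also wrong.} The ``mechanism'' you give for (b) fails even when \eqref{eq:SGD-map-diss} holds. By Jensen, $e^{-\lambda C}\,\mathbb E e^{\lambda\|\xi\|}\ge e^{\lambda(\mathbb E\|\xi\|-C)}\ge1$ for every $\lambda>0$ as soon as $C\le\mathbb E\|\xi\|$. That quantity is of order $\sigma\sqrt d$, and nothing relates $\sigma$ to $C$. The triangle inequality throws away the cancellation of the mean-zero first-order term. Lemma~\ref{lem:gauss-shift-d} instead uses $\|T(x)+\xi\|\le\|T(x)\|+\langle u,\xi\rangle+\|\xi\|^2/(2\|T(x)\|)$ with $u=T(x)/\|T(x)\|$. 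The linear term then costs only $e^{\lambda^2\sigma^2/2}$, and the contraction factor is $e^{-\lambda C+\lambda^2\sigma^2/2}<1$ for $\lambda<2C/\sigma^2$.

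This argument needs $\|T(x)\|$ large, so $PV\le\kappa V$ is obtained only for $\|x\|\ge R_1$ (Step 3 in the proof of Proposition~\ref{prop:FL-exp-d}). The compact set in \eqref{FL-new} is therefore in general the ball of radius $R_1$, not $\{\|x\|\le R_0\}$. Note also that the proposition is stated with a constant $b$; it is Step 3 that yields the $b\mathbf 1_K$ form you need. Because you ultimately cite the proposition, this second flaw affects your explanation rather than the logic, but the sketch as written should be discarded.
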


\section{Transfer operators, Banach spaces and regularization lemmas.}

In this section we set up the functional analytic framework to establish the main result of the paper.

\subsection{The transfer operators}

\label{sec:operators} In this section we recall the definition of the (annealed) transfer operator and of the (averaged) Koopman operator
associated to the system $(\ref{RS-new})$, showing some of the basic properties of these operators we will use later.

In Definition \ref{def:koopman-new} we defined the Koopman operator $P$ associated to a system as in  $(\ref{RS-new})$.

We remark that when the system is of the form \eqref{RS-q}, it also holds 
\begin{equation*}
(P\phi )(x)=\int_{\mathbb{R}^{d}}\phi (y)\hat{\rho}_{ T(x)}(y)dy.
\end{equation*}%

In this case we can also have a similar characterization for  the transfer operator ${L}:L^{1}(\mathbb{R}%
^{d})\rightarrow L^{1}(\mathbb{R}^{d})$. 
If $\nu $ is a Borel signed measure on $\mathbb{R}^{d}$ 
\begin{equation*}
\int_{\mathbb{R}^{d}}(P\phi )(x)d\nu (x)=\int_{\mathbb{R}^{d}}\int_{\mathbb{R%
}^{d}}\phi (y)\hat{\rho}_{T(x)}(y)dy~d\nu (x)
\end{equation*}%
supposing that $\nu $ has a density with respect to the Lebesgue measure $%
f\in L^{1}(\mathbb{R}^{d})$ i.e $d\nu =f(x)dx$ we can thus write 
\begin{equation}
\int_{\mathbb{R}^{d}}(P\phi )(x)d\nu (x)=\int_{\mathbb{R}^{d}}\phi
(y)\left( \int_{\mathbb{R}^{d}}\hat{\rho}_{T(x)}(y)f(x)dx\right) dy.
\label{duality}
\end{equation}%
We can then characterize the transfer operator associated to the system as

\begin{equation}
({L}f)(y):=\int \hat{\rho}_{T(x)}(y)f(x)dx.  \label{eq:stoctransfer}
\end{equation}

It is well known that the operator ${L}$ is a positive operator,
it preserves the integral and is a weak contraction with respect to the $%
L^{1}$ norm (see \cite{Gdisp} for details).

\noindent To prove  Theorem \ref{thm:gen2} we will use the construction outlined in \cite{kellerliverani2009} (see Section \ref{app:repfo}), similarly to what was done in \cite{OTHERPAPER} adapted to our case.
This is based on the idea of considering the target set $B_n$ as a hole in the phase space and consider it as an open system. The behavior of the resulting system is then studied  by means of the related transfer operators and their properties.

\begin{definition}\label{6}
Let \(x_0\in{\mathbb R}^d\). Let
\[
B_n=B(x_0,r_n)
\]
 be a sequence of
balls centred at \(x_0\).
We define the \textquotedblleft perturbed" versions of
the transfer operator by setting 
\begin{equation*}
({L}_{n}f)(x):=1_{B_{n}^{c}}(x)({L}f)(x),\ f\in L^{1}(%
\mathbb{R}^{d}).
\end{equation*}
\end{definition}

We define the perturbed Kolmogorov operator by duality as 
\begin{equation}
\int (P_{n}\phi )(x)f(x)dx\hspace{-3pt}=\int \phi (y)({L}_{n}f)(y)dy.
\label{mod}
\end{equation}

with $\phi \in L^{\infty }.$ For $P_{n}^{(n)}$ we have the following

\

\begin{proposition}
\label{p5}
For every $x \in \mathbb{R}^d$ and every $\varphi \in L^\infty(\mathbb{R}^d)$ we have
\[
\bigl[P_n^{(n)}(\varphi)\bigr](x)
= \mathbb{E}\Bigl(
\mathbf{1}_{B_n^c}(X_1(x)) \cdots \mathbf{1}_{B_n^c}(X_n(x))\,
\varphi\bigl(X_n(x)\bigr)
\Bigr),
\]
where $P_n^{(n)}$ denotes the $n$-th iterate of the perturbed Koopman operator $P_n$.
\end{proposition}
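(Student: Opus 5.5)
We first identify $P_n$ explicitly and then argue by induction on the number of iterates, with the Markov property of the i.i.d.\ random orbit doing the work.

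\emph{Step 1: an explicit formula for $P_n$.} For $f\in L^1$ and $\phi\in L^\infty$, the definition of $L_n$ and the duality \eqref{eq:dual-transfer} give
\[
\int \phi(y)(L_nf)(y)\,dy=\int \phi(y)1_{B_n^c}(y)(Lf)(y)\,dy=\int f(x)\,P(1_{B_n^c}\phi)(x)\,dx .
\]
This uses only that $1_{B_n^c}\phi\in L^\infty$. Comparing with \eqref{mod}, we get
\[
P_n\phi=P(1_{B_n^c}\phi)
\]
as elements of $L^\infty$. Using the representation of $P$ for systems of the form \eqref{RS-q}, the right-hand side is given pointwise by $\int 1_{B_n^c}(y)\phi(y)\hat\rho_{T(x)}(y)\,dy=\mathbb E[1_{B_n^c}(X_1(x))\phi(X_1(x))]$. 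This formula holds for every $x$, which settles the case of a single iterate.

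\emph{Step 2: induction on the number of iterates.} Let $P_n^{(k)}$ denote the $k$-th iterate, and assume that for all $x$
\[
P_n^{(k)}\phi(x)=\mathbb E\Bigl(\prod_{j=1}^{k}1_{B_n^c}(X_j(x))\,\phi(X_k(x))\Bigr).
\]
Then
\[
P_n^{(k+1)}\phi(x)=P_n\bigl(P_n^{(k)}\phi\bigr)(x)=\mathbb E\Bigl[1_{B_n^c}(X_1(x))\,\bigl(P_n^{(k)}\phi\bigr)(X_1(x))\Bigr].
\]
Now write $X_{j+1}(x)=X_j^{\sigma\omega}(X_1(x))$, where $\sigma$ is the shift on $\Omega$. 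The first coordinate $\omega_0$ is independent of $\sigma\omega$, and $\mathbb P=\pi^{\mathbb N}$ is a product measure, so Fubini applies. Conditioning on $\omega_0$ and substituting the inductive hypothesis evaluated at $z=X_1(x)$ gives
\[
P_n^{(k+1)}\phi(x)=\mathbb E\Bigl[\prod_{j=1}^{k+1}1_{B_n^c}(X_j(x))\,\phi(X_{k+1}(x))\Bigr].
\]
Taking $k+1=n$ yields the claim.

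\emph{Main obstacle.} The only delicate point is measure-theoretic. Duality identifies $P_n$ only as an element of $L^\infty$, i.e.\ up to Lebesgue-null sets, whereas the statement is pointwise for every $x$. We resolve this by \emph{defining} $P_n$ pointwise through the kernel formula $\int 1_{B_n^c}\phi\,\hat\rho_{T(x)}\,dy$. This is a genuine everywhere-defined function, it agrees with the dual object almost everywhere, and Step 1 shows it satisfies \eqref{mod}. In the induction step, $P_n^{(k)}\phi$ is then evaluated at the random point $X_1(x)$. Since the law of $X_1(x)$ has the density $\hat\rho_{T(x)}$ by assumption (a), it is absolutely continuous, so the choice of representative does not affect the expectation. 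Measurability of $(\omega,z)\mapsto X_j(z)$, needed for Fubini, follows from the measurability of $T$ and $P(\cdot,\cdot)$.
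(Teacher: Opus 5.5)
Your proof is correct and follows the same route as the paper. Duality gives $P_n\varphi=P(\mathbf{1}_{B_n^c}\varphi)$, and an induction that restarts the orbit at $X_1(x)$ produces the product formula. Two points the paper passes over quickly are made rigorous in your version: the explicit use of the shift and the product structure of $\mathbb{P}$ (via Fubini), and the a.e.-versus-pointwise issue, which you settle through absolute continuity of the law of $X_1(x)$.
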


\begin{proof}
Recall that the (annealed) Koopman operator $P$ associated to the system  is
defined by
\[
(P\varphi)(x) = \mathbb{E}\bigl[\varphi(X_1(x))\bigr],
\]
and that the transfer operator $L$ is its dual: whenever $d\nu = f(x)\,dx$ with
$f\in L^1(\mathbb{R}^d)$, we have
\begin{equation}\label{eq:dual-L-P}
\int_{\mathbb{R}^d} (P\psi)(x) f(x)\,dx
=
\int_{\mathbb{R}^d} \psi(y)\,(Lf)(y)\,dy
\qquad\text{for all }\psi\in L^\infty(\mathbb{R}^d).
\end{equation}

For the open system with hole $B_n$, we defined the perturbed transfer operator by
\[
(L_n f)(y) := \mathbf{1}_{B_n^c}(y)\,(Lf)(y), \qquad f \in L^1(\mathbb{R}^d),
\]
and the perturbed Koopman operator $P_n$ by duality:
\begin{equation}\label{eq:def-Pn}
\int_{\mathbb{R}^d} (P_n\varphi)(x)\,f(x)\,dx
=
\int_{\mathbb{R}^d} \varphi(y)\,(L_n f)(y)\,dy
\qquad\text{for all } f \in L^1(\mathbb{R}^d).
\end{equation}

\medskip

\noindent\emph{Step 1: operator identity for $P_n$.}
Using the definition of $L_n$ and the duality relation \eqref{eq:dual-L-P}, we obtain
for every $f\in L^1(\mathbb{R}^d)$:
\begin{align*}
\int_{\mathbb{R}^d} (P_n\varphi)(x)\,f(x)\,dx
&=
\int_{\mathbb{R}^d} \varphi(y)\,\mathbf{1}_{B_n^c}(y)\,(Lf)(y)\,dy \\
&=
\int_{\mathbb{R}^d} \bigl(\mathbf{1}_{B_n^c}\,\varphi\bigr)(y)\,(Lf)(y)\,dy \\
&=
\int_{\mathbb{R}^d} \bigl(P(\mathbf{1}_{B_n^c}\varphi)\bigr)(x)\,f(x)\,dx,
\end{align*}
where in the last line we used \eqref{eq:dual-L-P} with $\psi=\mathbf{1}_{B_n^c}\varphi$.
Since this holds for all $f\in L^1(\mathbb{R}^d)$, we deduce the operator identity
\begin{equation}\label{eq:Pn-as-P}
P_n\varphi = P(\mathbf{1}_{B_n^c}\varphi)
\quad\text{a.e. on }\mathbb{R}^d.
\end{equation}
In particular we get a   probabilistic representation of $P_n$,
\begin{equation}\label{eq:Pn-1-step}
(P_n\varphi)(x)
=
\mathbb{E}\bigl[ \mathbf{1}_{B_n^c}(X_1(x))\,\varphi(X_1(x)) \bigr].
\end{equation}

\medskip

\noindent\emph{Step 2: induction on the number of iterates (operatorial form).}
We now prove by induction on $k\ge1$ that
\begin{equation}\label{eq:claim-k}
\bigl[P_n^{(k)}\varphi\bigr](x)
=
\mathbb{E}\Bigl(
\mathbf{1}_{B_n^c}(X_1(x)) \cdots \mathbf{1}_{B_n^c}(X_k(x))\,
\varphi\bigl(X_k(x)\bigr)
\Bigr).
\end{equation}

For $k=1$, \eqref{eq:claim-k} is exactly \eqref{eq:Pn-1-step}, so the claim holds.

Assume that \eqref{eq:claim-k} holds for some $k\ge1$. Using \eqref{eq:Pn-as-P}, we
can write, for every $\psi\in L^\infty$,
\[
P_n\psi = P\bigl(\mathbf{1}_{B_n^c}\psi\bigr).
\]
Applying this with $\psi = P_n^{(k)}\varphi$ and evaluating at $x$ gives
\begin{equation}\label{eq:Pn-k+1}
\bigl[P_n^{(k+1)}\varphi\bigr](x)
=
\bigl[P_n\bigl(P_n^{(k)}\varphi\bigr)\bigr](x)
=
\mathbb{E}\Bigl[
\mathbf{1}_{B_n^c}\bigl(X_1(x)\bigr)\,
\bigl(P_n^{(k)}\varphi\bigr)\bigl(X_1(x)\bigr)
\Bigr],
\end{equation}
where we used again the representation of $P$ in terms of the random orbit.

Now we apply the inductive identity \eqref{eq:claim-k} with initial point
replaced by $X_1(x)$. Since $X_j(\cdot)$ is defined by iterating the same map
\textup{(4)}, the orbit starting from $X_1(x)$ at step~1 is given by
$X_2(x),\dots,X_{k+1}(x)$. Thus
\[
\bigl(P_n^{(k)}\varphi\bigr)\bigl(X_1(x)\bigr)
=
\mathbb{E}\Bigl(
\mathbf{1}_{B_n^c}(X_2(x)) \cdots \mathbf{1}_{B_n^c}(X_{k+1}(x))\,
\varphi\bigl(X_{k+1}(x)\bigr)
\Bigr).
\]
Substituting this expression into \eqref{eq:Pn-k+1} and using the linearity of the
expectation, we obtain
\[
\bigl[P_n^{(k+1)}\varphi\bigr](x)
=
\mathbb{E}\Bigl(
\mathbf{1}_{B_n^c}(X_1(x)) \cdots \mathbf{1}_{B_n^c}(X_{k+1}(x))\,
\varphi\bigl(X_{k+1}(x)\bigr)
\Bigr),
\]
which is precisely \eqref{eq:claim-k} with $k$ replaced by $k+1$. This completes
the induction.

\medskip

\noindent Taking $k=n$ in \eqref{eq:claim-k} yields
\[
\bigl[P_n^{(n)}\varphi\bigr](x)
=
\mathbb{E}\Bigl(
\mathbf{1}_{B_n^c}(X_1(x)) \cdots \mathbf{1}_{B_n^c}(X_n(x))\,
\varphi\bigl(X_n(x)\bigr)
\Bigr),
\]
which is exactly the statement of the proposition.
\end{proof}

\subsection{Functional spaces adapted to the system}

\label{sec:spacesRD}

We now define suitable functional spaces on which the transfer operators
introduced in the previous sections have a regularizing behavior and nice
spectral properties. 
To deal with the non compactness of the phase space, we will use some weighted spaces very similar to the ones used in \cite{OTHERPAPER} in the context of SDE, adapting it to the different set up ne need to deal here.

These spaces are in some sense spaces of weighted Bounded Variation of $L^1$ functions with exponential growing weights (while in \cite{OTHERPAPER} quadratic weights were used).
 The spaces will be
denoted as $BV_{A,\lambda }$,  and $L_{A,\lambda }^{1}$.

Let $\lambda
>0$ considered at Assmption $(c)$; define the weight functions 
\begin{eqnarray}
\rho _{A,\lambda }\left(\Vert x\Vert |\right)  &=&Ae^{\lambda
||x||}  \label{WF} 
\end{eqnarray}

where $A$ is such that $\forall x$ $\rho _{A,\lambda
}\left( |\left\vert x\right\vert |\right) \geq 1+\left\vert x\right\vert ^{2}$. 

From now on, we will fix such a $\lambda $ and $A$ and denote $\rho :=\rho
_{A,\lambda }$.

Let $L_{A,\lambda }^{1}
$ be the space of
Lebesgue measurable $f:\mathbb{R}^{d}\rightarrow \mathbb{R}$ such that 
\begin{equation*}
\left\Vert f\right\Vert _{L_{A,\lambda }^{1}
}:=\int_{\mathbb{R}^{d}}\rho \left( \left\vert x\right\vert \right)
\left\vert f\left( x\right) \right\vert dx<\infty .
\end{equation*}%
Note that, $L_{A,\lambda }^{1}\subset L^{1}$ and if $f\in
L_{{A,\lambda }}^{1}$ then $\Vert f\Vert _{L^{1}} \leq
||f||_{L_{A,\lambda }^{1}}$.

We now define a sort of Bounded Variation regularity following the idea of \cite{keller1985} and  \cite{saussol1998}.
  For a Borel subset $S\subseteq 
\mathbb{R}^{d}$ let us define 
\begin{equation*}
osc(f,S)={ess}_{x\in S}f-{ess}_{x\in S}f.
\end{equation*}%
We now define the seminorm:%
\begin{equation*}
|f| _{osc}=\sup_{0<\epsilon \leq 1}\epsilon
^{-1}\int_{\mathbb{R}^{d}}osc(f,B_{\epsilon }(x))d\psi (x).
\end{equation*}

Here the measure $\psi$ is a Radon probability measure on $\mathbb{R}^d$ and
we require that $\psi$ is absolutely continuous with respect to
the Lebesgue measure, having a continuous bounded density $\psi^{\prime }$
such that $\psi^{\prime }>0$ everywhere.

We can then define  the $\Vert \cdot \Vert
_{BV_{A,\lambda }}$ norm by setting 
\begin{eqnarray}
\left\Vert f\right\Vert _{BV_{A,\lambda }} &:&=\left\Vert f\right\Vert
_{L_{A,\lambda }^{1}}+| f|_{osc}.
\end{eqnarray}%
 $\Vert \cdot
\Vert _{BV_{A,\lambda }}$  defines a norm and the set of functions for which this norm is finite is a Banach space which we denote $%
BV_{A,\lambda }.$

We prove that our weighted bounded variation space $BV_{A,\lambda }\left( 
\mathbb{R}^{d}\right) $ is compactly immersed in $L^{1}.$

\begin{theorem}
\label{thm:embeddingRD} $BV_{A,\lambda }\left( \mathbb{R}^{d}\right)
\hookrightarrow L^{1}\left( \mathbb{R}^{d}\right) $ is a compact embedding.
\end{theorem}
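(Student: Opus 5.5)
The plan is a tightness-plus-local-compactness argument of Kolmogorov--Riesz type. Take a sequence $(f_k)$ with $\|f_k\|_{BV_{A,\lambda}}\le M$. We want a subsequence that is Cauchy in $L^1(\mathbb{R}^d)$. The weight handles the mass at infinity, and the oscillation seminorm supplies equicontinuity of translations on compact sets.

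\emph{Step 1 (tightness).} Since $\rho(\|x\|)=Ae^{\lambda\|x\|}$ grows to infinity,
\[
\int_{\|x\|>R}|f_k|\,dx\le \frac{1}{Ae^{\lambda R}}\|f_k\|_{L^1_{A,\lambda}}\le \frac{M}{Ae^{\lambda R}},
\]
uniformly in $k$. So it suffices to extract, for every fixed $R$, a subsequence that is Cauchy in $L^1(B_R)$, and then conclude by a diagonal argument.

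\emph{Step 2 (local BV control).} On $B_{R+1}$ the density $\psi'$ is continuous and strictly positive, so $\psi'\ge c_R>0$ there. Hence for $0<\epsilon\le1$,
\[
\int_{B_{R+1}} osc(f_k,B_\epsilon(x))\,dx\le c_R^{-1}\epsilon\,|f_k|_{osc}\le c_R^{-1}M\epsilon .
\]
We convert this into an $L^1$ modulus of continuity. For $\|h\|<\epsilon$ and $x\in B_R$, both $x$ and $x+h$ lie in $B_\epsilon(x)$. Therefore $|f_k(x+h)-f_k(x)|\le osc(f_k,B_\epsilon(x))$ for a.e.\ $x$, after choosing a good representative or working with Lebesgue points. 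This gives
\[
\int_{B_R}|f_k(x+h)-f_k(x)|\,dx\le c_R^{-1}M\epsilon .
\]

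\emph{Step 3 (extraction).} Add the uniform $L^1$ bound $\|f_k\|_{L^1}\le M$. The family $\{f_k|_{B_R}\}$ is then bounded and has translations that are uniformly $L^1$-equicontinuous. The Kolmogorov--Riesz--Fréchet theorem on the bounded set $B_R$ (or convolution with a mollifier $\eta_\epsilon$ followed by Arzelà--Ascoli) gives relative compactness in $L^1(B_R)$. Concretely, the mollifier route uses $\|f_k-f_k*\eta_\epsilon\|_{L^1(B_R)}\le C_R M\epsilon$, and notes that $\{f_k*\eta_\epsilon\}$ is uniformly bounded and equi-Lipschitz on $B_R$ with constants depending on $\epsilon$ and $M$. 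Combining with Step 1, for every $\delta>0$ the family is covered by finitely many $L^1$ balls of radius $\delta$. Total boundedness gives compactness of the embedding. Continuity of the embedding is the inequality $\|f\|_{L^1}\le\|f\|_{L^1_{A,\lambda}}$ already noted.

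The main obstacle is Step 2: passing rigorously from the oscillation integral to the translation estimate. The difficulty is that $osc$ is defined through essential sup and inf, so the pointwise inequality holds only almost everywhere and requires care with representatives. I would handle this by averaging over $h$ as well, and by using the fact that the ess-sup over $B_\epsilon(x)$ dominates $f_k(y)$ for a.e.\ $y$ in that ball. Fubini on $(x,y)$ with $\|x-y\|<\epsilon$ then yields the mollified estimate directly, with no need for pointwise representatives.
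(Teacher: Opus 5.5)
Your argument is correct, but it follows a different route from the paper. The paper's proof is a two-line reduction. By the choice of $A$ one has $Ae^{\lambda\|x\|}\ge 1+\|x\|^2=\rho_2$, hence $\|f\|_{BV_2}\le\|f\|_{BV_{A,\lambda}}$. So the unit ball of $BV_{A,\lambda}$ sits inside the unit ball of the quadratically weighted space $BV_2$, which embeds compactly in $L^1$ by \cite{OTHERPAPER}, and total boundedness is inherited.

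You instead prove the compactness from scratch. You get tightness from the weight and an $L^1$ modulus of continuity for translations on $B_R$ from the oscillation seminorm together with $\psi'\ge c_R>0$ on compacts. You then conclude by Kolmogorov--Riesz, or by mollification plus Arzel\`a--Ascoli. The paper's route is shorter, but it outsources all the analysis to the cited result and genuinely depends on the normalization $Ae^{\lambda\|x\|}\ge 1+\|x\|^2$. Yours is self-contained and shows that only two features matter: the weight tends to infinity, and $\psi'$ is locally bounded below. So it works for any such weight, with no comparison to $\rho_2$.

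The step you flag as the main obstacle is handled cleanly by Lebesgue points. Fix $\|h\|<\epsilon$. For a.e.\ $x$, both $x$ and $x+h$ are Lebesgue points of $f_k$ lying in the open ball $B_\epsilon(x)$. Their small-ball averages, and hence their values, lie between $\operatorname{ess\,inf}_{B_\epsilon(x)}f_k$ and $\operatorname{ess\,sup}_{B_\epsilon(x)}f_k$, which gives the pointwise bound by $osc(f_k,B_\epsilon(x))$. The Fubini detour is therefore unnecessary.

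One minor point: your uniform bound $\|f_k\|_{L^1}\le M$ uses $Ae^{\lambda\|x\|}\ge 1$. This holds by the paper's choice of $A$; otherwise replace $M$ by $M/A$.
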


\begin{proof}
We recall the definition of a certain space used in \cite{OTHERPAPER}, characterized by the norm $\Vert \cdot \Vert _{BV_{2}}$, we are  going to define.
Let us consider the weight function $\rho _{2}\left( |\left\vert x\right\vert |\right)  =1+\left\vert x\right\vert ^{2}$.
Let $L_{2 }^{1}$ be the space of Lebesgue
measurable $f:\mathbb{R}^{d}\rightarrow \mathbb{R}$ such that 
\begin{equation*}
\left\Vert f\right\Vert _{L_{2}^{1}\left( \mathbb{R}^{d}\right) }:=\int_{%
\mathbb{R}^{d}}\rho _{2}\left( \left\vert x\right\vert \right) \left\vert
f\left( x\right) \right\vert dx<\infty .
\end{equation*}%

Let us consider the associated Bounded Variation norm
$$\left\Vert f\right\Vert _{BV_{2}}=\left\Vert f\right\Vert_{L_{2}^{1}} +\Vert f\Vert _{osc}.$$
The set of $L^1$ densities for which this norm is finite is a complete normed  vector space (see \cite{OTHERPAPER}) we will denote by $BV_2$.

Since $\rho \geq \rho _{2}$ by the choice we made on $A$ above, note that, $L_{A,\lambda }^{1}\subset L_{2}^{1}\subset L^{1}$ and if $f\in
L_{2}^{1}$ then $\Vert f\Vert _{L^{1}}\leq \Vert f\Vert _{L_{2}^{1}}\leq
||f||_{L_{A,\lambda }^{1}}$.

In \cite{OTHERPAPER} it is proved that there is a compact embedding $%
BV_{2}\left( \mathbb{R}^{d}\right) \hookrightarrow L^{1}\left( \mathbb{R}%
^{d}\right) $. 

Since $\rho \geq \rho _{2}$ we have that the unit ball of $BV_{A,\lambda
}\left( \mathbb{R}^{d}\right) $ is a closed subset of the unit ball of $%
BV_{2}\left( \mathbb{R}^{d}\right) $ and hence totally bounded  in $%
L^{1}\left( \mathbb{R}^{d}\right) $. 
\end{proof}

Let us suppose now that $f\in BV_{A,\lambda }$ and let $\mathcal{K}$ be a
compact set in $\mathbb{R}^{d}.$ We will need later on an estimate of the $%
L^{\infty }$ norm of $f$ on $\mathcal{K}$, designated as $
||f||_{L^{\infty }(\mathcal{K})}.$

\begin{proposition}
\label{imm} For any compact set $\mathcal{K}\subset \mathbb{R}^d$ we have

\begin{equation}  \label{in}
\|f\|_{L^{\infty}( \mathcal{K})}\le \frac{\max\left(||\psi^{\prime
}||_{L^{\infty}(\mathbb{R}^d)},1\right)}{d_{\mathcal{K}}}||f||_{BV_{A,\lambda}},
\end{equation}
where again $\psi^{\prime }$ denotes the density of $\psi$ with respect to
the Lebesgue measure and $d_{\mathcal{K}}=\text{essinf}_{x\in \mathcal{K}%
}\psi(B_1(x))>0$. 
\end{proposition}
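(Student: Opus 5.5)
The plan is to bound \(|f(x)|\) for (almost) every \(x\in\mathcal K\) by an average of \(|f|\) over the unit ball \(B_1(x)\) plus the oscillation of \(f\) on that ball, and then to integrate this pointwise estimate against \(\psi\) so that the \(L^1_{A,\lambda}\) norm and the osc seminorm with \(\epsilon=1\) appear.

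\emph{Pointwise estimate.} Fix \(x\in\mathcal K\), a Lebesgue point of \(f\). For every \(y\) with \(\|x-y\|<1\), both \(x\) and (a.e.) \(z\in B_1(x)\) lie in \(B_1(y)\). Hence
\[
|f(x)|\le |f(z)|+osc(f,B_1(y)).
\]
This suggests a symmetric device: integrate in \(y\) over \(B_1(x)\) with respect to \(\psi\), and separately control \(|f(z)|\) by an average.

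Concretely, for \(y\in B_1(x)\) we use
\[
|f(x)|\le \operatorname{ess\,inf}_{B_1(y)}|f| + osc(f,B_1(y)),
\]
valid since \(x\in B_1(y)\). Moreover
\[
\operatorname{ess\,inf}_{B_1(y)}|f|\le \frac{1}{\mathrm{Leb}(B_1(y))}\int_{B_1(y)}|f|\le C_d\|f\|_{L^1}.
\]
To match the constants stated, I would instead bound \(\operatorname{ess\,inf}_{B_1(y)}|f|\) by \(|f(y)|\), which holds at a.e. \(y\) as a Lebesgue point of \(f\) inside the open ball.

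\emph{Integration against \(\psi\).} Integrating over \(y\in B_1(x)\) with respect to \(\psi\) gives
\[
\psi(B_1(x))\,|f(x)|\le \int |f(y)|\,\psi'(y)\,dy+\int osc(f,B_1(y))\,d\psi(y)\le \|\psi'\|_\infty\|f\|_{L^1}+|f|_{osc},
\]
where the last term is the \(\epsilon=1\) instance in the definition of \(|f|_{osc}\). Since \(\|f\|_{L^1}\le\|f\|_{L^1_{A,\lambda}}\), the right-hand side is at most \(\max(\|\psi'\|_\infty,1)\|f\|_{BV_{A,\lambda}}\). Dividing by \(\psi(B_1(x))\ge d_{\mathcal K}\) and taking the essential supremum over \(x\in\mathcal K\) yields the stated inequality.

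\emph{Remaining points.} One must check that \(d_{\mathcal K}>0\). This holds because \(\psi'\) is continuous and positive, so \(x\mapsto\psi(B_1(x))\) is continuous and positive and attains a positive minimum on the compact set \(\mathcal K\).

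The main obstacle I expect is the measure-theoretic care in the pointwise step: \(f\) is only an \(L^1\) class, so the inequality \(|f(x)|\le|f(y)|+osc(f,B_1(y))\) must be read for a.e. \(x\) and \(y\). I would handle this by working with a fixed representative and using that, for almost every \(y\), the value \(f(y)\) lies between the essential infimum and essential supremum of \(f\) on \(B_1(y)\). The same holds for almost every \(x\in B_1(y)\). A Fubini argument on the set of pairs \((x,y)\) with \(\|x-y\|<1\) then gives the bound for a.e. \(x\), which is all that the \(L^\infty(\mathcal K)\) norm requires.
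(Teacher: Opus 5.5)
Your proof is correct, but it takes a different route from the paper's. The paper gives no direct argument. It quotes Proposition 17 of \cite{OTHERPAPER}, which is the same inequality with $\|f\|_{BV_2}=\|f\|_{L^1_2}+|f|_{osc}$ on the right-hand side. It then uses $\rho\ge\rho_2=1+|x|^2$ (the reason $A$ was chosen as it was) to get $\|f\|_{BV_2}\le\|f\|_{BV_{A,\lambda}}$.

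You instead prove the averaging estimate from scratch. For Lebesgue points $x,y$ with $\|x-y\|<1$ you have $|f(x)|\le|f(y)|+\operatorname{osc}(f,B_1(y))$. Integrating in $y$ over $B_1(x)$ against $\psi$ and using the $\epsilon=1$ term of the seminorm gives $\psi(B_1(x))\,|f(x)|\le\|\psi'\|_\infty\|f\|_{L^1}+|f|_{osc}$. This gives you a self-contained proof and a slightly sharper statement, $\|f\|_{L^\infty(\mathcal K)}\le d_{\mathcal K}^{-1}\bigl(\|\psi'\|_\infty\|f\|_{L^1}+|f|_{osc}\bigr)$. That bound uses only $\rho\ge 1$, so it does not depend on the weight at all. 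The paper's route is shorter, but it rests on the external result and on the stronger comparison $\rho\ge\rho_2$.

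Two minor points. First, the opening sentence of your pointwise step is false as written: $z\in B_1(x)$ and $\|x-y\|<1$ do not give $z\in B_1(y)$. Your final argument never uses it, so no harm is done. Second, the Fubini step at the end is unnecessary. If $x$ is a Lebesgue point of $f$ lying in the open ball $B_1(y)$, then $\operatorname*{ess\,inf}_{B_1(y)}f\le f(x)\le\operatorname*{ess\,sup}_{B_1(y)}f$ holds for every such $y$ at once.
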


\begin{proof}
    The proposition  is proved in  \cite{OTHERPAPER}, Proposition 17 with the $BV_2$ norm in the place of the $BV_{A,\lambda}$ norm  (see the proof of Theorem \ref{thm:embeddingRD} for the definition of the $BV_2$ norm). However by the choice of $A$ made above we have  $||f||_{BV_{2}}\leq ||f||_{BV_{A,\lambda}}$ proving the statement.
\end{proof}

\subsection{Regularization properties for the transfer operator.} \label{sec:spacerdtotd}

In this section we collect some estimates about the regularization properties of the transfer operator.

\begin{proposition}
    Under the standing assumptions (a),...,(c),
     there exist \(a\in(0,1)\) \(C_1,C_2, C_{\mathrm{reg}}>0\) such
that

\begin{equation}
|{L} f|_{osc}
\le
C_{\mathrm{reg}}\,\|f\|_{L^1}
\qquad\text{for every }f\in L^1,
\label{1eq:reg-q-step}
\end{equation}

and\begin{equation}
\|{L} f\|_{BV_{A,\lambda_0}}
\le
\alpha \,\|f\|_{BV_{A,\lambda_0}}
+
C_{2}\,\|f\|_{L^1}
\qquad\text{for every }f\in BV_{A,\lambda_0}.
\label{LY-unperturbed}
\end{equation}

\end{proposition}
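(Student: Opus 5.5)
The two estimates will be proved separately: the regularization bound comes from the smoothness of the noise kernel, and the Lasota--Yorke inequality comes from the Foster--Lyapunov drift applied to the weighted $L^1$ part of the norm. Throughout I use the kernel representation \eqref{eq:stoctransfer}, $(Lf)(y)=\int \hat\rho_{T(x)}(y)f(x)\,dx$, and I read $\hat\rho_{T(x)}$ as the density of $T(x)+P(x,\omega)$.

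\emph{Step 1: the regularization bound \eqref{1eq:reg-q-step}.} Fix $y$ and $\epsilon\in(0,1]$. For $y',y''\in B_\epsilon(y)$, the uniform $C^1$ bound \eqref{uniform-C1-noise} gives
\[
|\hat\rho_{T(x)}(y')-\hat\rho_{T(x)}(y'')|\le 2\epsilon M,
\qquad M:=\sup_z\|\hat\rho_z\|_{C^1}.
\]
Hence $osc(Lf,B_\epsilon(y))\le 2\epsilon M\|f\|_{L^1}$ pointwise in $y$. Integrating against the probability measure $\psi$ and dividing by $\epsilon$ yields
\[
|Lf|_{osc}\le 2M\|f\|_{L^1},
\]
so we may take $C_{\mathrm{reg}}=2M$. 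If $\hat\rho$ is instead a translate of a fixed density, the same argument applies verbatim.

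\emph{Step 2: the weighted $L^1$ estimate.} By duality \eqref{eq:dual-transfer} and positivity,
\[
\|Lf\|_{L^1_{A,\lambda}}=\int V\,|Lf|\le \int V\,L|f| = \int (PV)\,|f|.
\]
The drift condition \eqref{FL-new} then gives
\[
\|Lf\|_{L^1_{A,\lambda}}\le \kappa\|f\|_{L^1_{A,\lambda}}+b\int_K|f|\le \kappa\|f\|_{L^1_{A,\lambda}}+b\|f\|_{L^1}.
\]
This step requires the Lyapunov function $V$ to coincide with the weight $\rho$, that is, $\lambda_0=\lambda$ and the same $A$. That is the natural reading of the statement; I expect the subscript $\lambda_0$ to be a notational variant of $\lambda$.

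\emph{Step 3: combining.} Adding the two estimates,
\[
\|Lf\|_{BV_{A,\lambda}}\le \kappa\|f\|_{L^1_{A,\lambda}}+(b+C_{\mathrm{reg}})\|f\|_{L^1}\le \kappa\|f\|_{BV_{A,\lambda}}+C_2\|f\|_{L^1},
\]
with $\alpha=\kappa<1$ and $C_2=b+C_{\mathrm{reg}}$. (The constant written $a$ in the statement is this $\alpha$.)

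The main obstacle is purely technical: in Step 1 one must justify that $Lf$ has a pointwise-defined representative, so that the essential oscillation equals the actual oscillation, and that the interchange of $\sup$ and the integral is legitimate. Both follow because $y\mapsto\int\hat\rho_{T(x)}(y)f(x)\,dx$ is Lipschitz by dominated convergence, using $f\in L^1$ together with the uniform bound $M$.
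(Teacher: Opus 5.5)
Your proposal is correct and follows the paper's proof essentially step for step. Like the paper, you get the oscillation bound from the uniform $C^1$ bound on the kernel (the paper states $\|Lf\|_{C^1}\le M_{reg}\|f\|_{L^1}$, which amounts to your Lipschitz estimate with $C_{\mathrm{reg}}=2M$), obtain the weighted $L^1$ contraction from duality plus the Foster--Lyapunov drift and positivity, and add the two with $\alpha=\kappa$. Both you and the paper apply the duality to the unbounded weight $V$; this is justified by Tonelli on the kernel representation, since all the integrands are nonnegative.
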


\begin{proof}
We first prove the regularising estimate
\eqref{1eq:reg-q-step}. By Assumption \textup{(a)}, for each
\(z\in{\mathbb R}^d\), the law of the  noise has a density
\(\hat\rho_z\in C^1({\mathbb R}^d)\), and
\[
M_{reg}:=\sup_{z\in{\mathbb R}^d}\|\hat\rho_z\|_{C^1}<\infty.
\]

We know  the transfer operator associated to the system can be rewitten as

\begin{equation}
({L}f)(y):=\int \hat{\rho}_{T(x)}(y)f(x)dx.  \label{eq:stoctransfer}
\end{equation}

This shows that \begin{equation}\label{C^1}
||{L}f||_{C^1}\leq M_{reg}||f||_{L^1}
\end{equation}implying 
\eqref{1eq:reg-q-step}.

We now prove  \eqref{LY-unperturbed}. By Assumption
\textup{(b)},
\[
P V
\le
\kappa V+b\,\mathbf 1_K.
\]
Using the duality between \({L}\) and
\({P}\), for every nonnegative \(f\in L^1_{A,\lambda_0}\),
\[
\|{L}f\|_{L^1_{A,\lambda_0}}
=
\int_{{\mathbb R}^d}({L}f)(z)\,V(z)\,dz
=
\int_{{\mathbb R}^d}f(z)\,({P}V)(z)\,dz.
\]
Hence
\[
\|{L}f\|_{L^1_{A,\lambda_0}}
\le
\kappa \int_{{\mathbb R}^d}f(z)V(z)\,dz
+
b\int_K f(z)\,dz,
\]
that is,
\[
\|{L}f\|_{L^1_{A,\lambda_0}}
\le
\kappa \|f\|_{L^1_{A,\lambda_0}}+b\,\|f\|_{L^1(K)}.
\]
By positivity of \({L}\), the same estimate extends to
signed \(f\) by replacing \(f\) with \(|f|\). This proves \[
\|{L}f\|_{L^1_{A,\lambda_0}}
\le
\kappa \|f\|_{L^1_{A,\lambda_0}}+b\,\|f\|_{L^1}
\]
which combined with \eqref{1eq:reg-q-step}, gives \eqref{LY-unperturbed}.\end{proof}

\subsection{Regularization for the perturbed operators}

In this section we prove Lasota Yorke inequalities, like in Proposition \ref{LY-unperturbed} for the perturbed operators  $L_n$.

Let
\[
|f|_{osc}
:=
\sup_{0<\varepsilon\le \varepsilon_0}
\varepsilon^{-1}
\int_{{\mathbb R}^d}
\operatorname{osc}(f,B_\varepsilon(z))\,d\psi(z)
\]
denote the oscillation seminorm, where
\[
\operatorname{osc}(f,B_\varepsilon(z))
:=
\operatorname*{ess\,sup}_{B_\varepsilon(z)} f
-
\operatorname*{ess\,inf}_{B_\varepsilon(z)} f.
\]

In the following we will denote by:
\[
\|\cdot\|_s:=|\cdot |_{osc}+ \|\cdot\|_{L^1},
\qquad
\|\cdot\|_w:=\|\cdot\|_{L^1},
\]

Next Lemma shows perturbative estimates on the size of the perturbation which is made by adding a hole to the system, based on the above norms and seminorms. 

\begin{lemma}[Oscillation estimate for cylindrical holes]
\label{lem:cylinder-hole-osc}
 Let
us consider the sequance of targets \(
B_n=B(x_0,r_n)
\)
with \(r_n\to 0\).

For a measurable bounded function \(g:{\mathbb R}^d\to\mathbb R\), define
\[
M_n g:=\mathbf 1_{B_n^c}g,
\qquad
H_n g:=\mathbf 1_{B_n}g.
\]

Then, for every bounded \(g\),
\begin{equation}
|M_n g|_{osc}
\le
|g|_{osc}
+
2\,\Delta_n\,\|g\|_{L^\infty},
\label{eq:Mn-osc-est}
\end{equation}
and
\begin{equation}
|H_n g|_{osc}
\le
|g|_{osc}
+
2\,\Delta_n\,\|g\|_{L^\infty},
\label{eq:Hn-osc-est}
\end{equation}
where
\begin{equation}
\Delta_n
:=
\sup_{0<\varepsilon\le \varepsilon_0}
\varepsilon^{-1}
\psi\!\left(
U_\varepsilon(\partial B_n)
\right),
\qquad
U_\varepsilon(\partial B_n)
:=
\{z\in{\mathbb R}^d:\operatorname{dist}(z,\partial B_n)<\varepsilon\}.
\label{eq:Delta-n-def}
\end{equation}

\end{lemma}

\begin{proof}
We prove \eqref{eq:Mn-osc-est}; the proof of \eqref{eq:Hn-osc-est} is
identical.

We recall the definition of oscillation seminorm and its notations
\begin{equation*}
|f| _{osc}=\sup_{0<\epsilon \leq 1}\epsilon
^{-1}\int_{\mathbb{R}^{d}}osc(f,B_{\epsilon }(x))d\psi (x).
\end{equation*}

Fix \(z\in{\mathbb R}^d\) and \(\varepsilon>0\). We distinguish two cases.

\medskip
\noindent
\emph{Case 1: \(B_\varepsilon(z)\cap \partial B_n=\varnothing\).}
Then the ball \(B_\varepsilon(z)\) is either entirely contained in \(B_n\) or
entirely contained in \(B_n^c\). In the first case, \(M_n g\equiv 0\) on
\(B_\varepsilon(z)\), so
\[
\operatorname{osc}(M_n g,B_\varepsilon(z))=0
\le \operatorname{osc}(g,B_\varepsilon(z)).
\]
In the second case, \(M_n g=g\) on \(B_\varepsilon(z)\), hence
\[
\operatorname{osc}(M_n g,B_\varepsilon(z))
=
\operatorname{osc}(g,B_\varepsilon(z)).
\]

\medskip
\noindent
\emph{Case 2: \(B_\varepsilon(z)\cap \partial B_n\neq \varnothing\).}
In this case the ball intersects both \(B_n\) and \(B_n^c\), so
\[
\operatorname{osc}(M_n g,B_\varepsilon(z))
\le
\operatorname{osc}(g,B_\varepsilon(z))
+
2\|g\|_{L^\infty}.
\]
Indeed, multiplying by \(\mathbf 1_{B_n^c}\) can only create an additional
jump between the values of \(g\) and \(0\), whose size is bounded by
\(2\|g\|_{L^\infty}\).

Combining the two cases, we obtain for every \(z\) and \(\varepsilon\),
\[
\operatorname{osc}(M_n g,B_\varepsilon(z))
\le
\operatorname{osc}(g,B_\varepsilon(z))
+
2\|g\|_{L^\infty}\,
\mathbf 1_{U_\varepsilon(\partial B_n)}(z).
\]
Integrating with respect to \(d\psi(z)\), multiplying by
\(\varepsilon^{-1}\), and taking the supremum over
\(0<\varepsilon\le \varepsilon_0\), we get
\[
|M_n g|_{osc}
\le
|g|_{osc}
+
2\|g\|_{L^\infty}
\sup_{0<\varepsilon\le \varepsilon_0}
\varepsilon^{-1}
\psi\!\left(U_\varepsilon(\partial B_n)\right),
\]
which is exactly \eqref{eq:Mn-osc-est}.
\end{proof}

\begin{lemma}[Lasota--Yorke inequality for the perturbed operators]
\label{lem:LY-punctured-q1}
Assume that the random dynamical system \eqref{RS-new} satisfies
Assumption \textup{(a),(b)}.
Let
\[
B_n:=B(z_0,r_n)\subset{\mathbb R}^d,
\qquad
M_n f:=\mathbf 1_{B_n^c}f,
\qquad
L_n:=M_nL.
\]

Then there exist \(N\in\mathbb N\) and a constant \(C_{\mathrm{LY}}>0\) such
that, for every \(n\ge N\) and every \(f\in BV_{A,\lambda}\),
\begin{equation}
\|L_n f\|_{BV_{A,\lambda}}
\le
\kappa \|f\|_{BV_{A,\lambda}}
+
C_{\mathrm{LY}}\,\|f\|_{L^1_{A,\lambda}}.
\label{eq:LY-punctured-q1}
\end{equation}
In particular, the family \((L_n)_{n\ge N}\) satisfies a uniform
Lasota--Yorke inequality on \(BV_{A,\lambda}\).
\end{lemma}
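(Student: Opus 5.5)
We split the $BV_{A,\lambda}$ norm of $L_n f$ into its weighted $L^1$ part and its oscillation part, and bound each using the unperturbed estimates \eqref{1eq:reg-q-step}, \eqref{C^1}, \eqref{LY-unperturbed} together with Lemma \ref{lem:cylinder-hole-osc}.

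\emph{Weighted $L^1$ part.} Since $|M_n g|\le |g|$ pointwise, we have $\|L_nf\|_{L^1_{A,\lambda}}\le \|Lf\|_{L^1_{A,\lambda}}$. The duality argument in the proof of \eqref{LY-unperturbed}, based on the Foster--Lyapunov condition \eqref{FL-new}, then gives
\[
\|L_nf\|_{L^1_{A,\lambda}}\le \kappa\|f\|_{L^1_{A,\lambda}}+b\|f\|_{L^1}.
\]

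\emph{Oscillation part.} We apply \eqref{eq:Mn-osc-est} with $g=Lf$. This is legitimate because \eqref{C^1} gives $\|Lf\|_{L^\infty}\le \|Lf\|_{C^1}\le M_{reg}\|f\|_{L^1}$, so $Lf$ is bounded. We obtain
\[
|L_nf|_{osc}\le |Lf|_{osc}+2\Delta_n M_{reg}\|f\|_{L^1}\le (C_{\mathrm{reg}}+2\Delta_nM_{reg})\|f\|_{L^1}.
\]

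\emph{Adding up.} Using $\|f\|_{L^1}\le\|f\|_{L^1_{A,\lambda}}$ and $\kappa\|f\|_{L^1_{A,\lambda}}\le\kappa\|f\|_{BV_{A,\lambda}}$, the two bounds combine to
\[
\|L_nf\|_{BV_{A,\lambda}}\le \kappa\|f\|_{BV_{A,\lambda}}+(b+C_{\mathrm{reg}}+2M_{reg}\Delta_n)\|f\|_{L^1_{A,\lambda}}.
\]
So it suffices to show that $\sup_{n\ge N}\Delta_n<\infty$, and then take $C_{\mathrm{LY}}:=b+C_{\mathrm{reg}}+2M_{reg}\sup_{n\ge N}\Delta_n$. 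Notice that the hole costs nothing at the level of $f$: it is absorbed into the weak-norm constant, because $L$ is already regularising from $L^1$ into $C^1$.

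\emph{Main obstacle: uniform bound on $\Delta_n$.} The set $U_\varepsilon(\partial B_n)$ is contained in the annulus $\{r_n-\varepsilon<|z-z_0|<r_n+\varepsilon\}$. Since $\psi'$ is bounded, this gives
\[
\psi(U_\varepsilon(\partial B_n))\le \|\psi'\|_\infty\,\omega_d\bigl((r_n+\varepsilon)^d-(\max(r_n-\varepsilon,0))^d\bigr),
\]
where $\omega_d$ is the volume of the unit ball. We bound the bracket in two regimes.
\begin{itemize}
\item If $\varepsilon\ge r_n$, the bracket is at most $(2\varepsilon)^d\le 2^d\varepsilon_0^{d-1}\varepsilon$.
\item If $\varepsilon<r_n$, the mean value theorem bounds it by $2\varepsilon\, d\,(2r_n)^{d-1}$, which is $O(\varepsilon)$ uniformly once $r_n\le 1$.
\end{itemize}
Hence $\varepsilon^{-1}\psi(U_\varepsilon(\partial B_n))\le C(d,\varepsilon_0)\|\psi'\|_\infty$ uniformly in $\varepsilon\in(0,\varepsilon_0]$ and in $n\ge N$, where $N$ is chosen so that $r_n\le1$ for $n\ge N$ (possible since $r_n\to0$). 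In dimension $d=1$ this is just a count of two boundary points. This gives the uniform bound on $\Delta_n$ and completes the proof.

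The step to double-check is the use of Lemma \ref{lem:cylinder-hole-osc}: it requires $g$ bounded, which \eqref{C^1} supplies, and its $\varepsilon_0$ must be the same one used in the oscillation seminorm, which is consistent with the definitions.
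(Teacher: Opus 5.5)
Your proof is correct and follows the same route as the paper. You apply Lemma~\ref{lem:cylinder-hole-osc} to \(g=Lf\), bound \(\|Lf\|_{L^\infty}\) through the \(C^1\) regularization \eqref{C^1}, and control the weighted \(L^1\) part by the Foster--Lyapunov duality behind \eqref{LY-unperturbed}; your annulus-volume estimate giving \(\sup_{n\ge N}\Delta_n<\infty\) supplies a step the paper's proof leaves implicit, and your bookkeeping correctly lands on the coefficient \(\kappa\) in front of \(\|f\|_{BV_{A,\lambda}}\).
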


\begin{proof}
By Lemma \ref{lem:cylinder-hole-osc}, applied to \(g=L f\), and the positivity of the involved operators
\[
||L_n f||_{BV_{A,\lambda}}
=
||M_n(L f)||_{BV_{A,\lambda}}
\le
||L f||_{BV_{A,\lambda}}+2\,\Delta_n\,\|Lf\|_{L^\infty}.
\]
Using \eqref{LY-unperturbed} and \ref{C^1}
 we get
\begin{equation}
\|{L} f\|_{BV_{A,\lambda_0}}
\le
\alpha \,\|f\|_{BV_{A,\lambda_0}}
+ C_{2}\,\|f\|_{L^1}+ M_{reg}||f||_{L^1}
\end{equation}
{for every }$f\in BV_{A,\lambda_0}$.
\end{proof}

\section{The response of the leading eigenvalue after suitable small perturbations.}

\label{app:repfo}

Our main technical tool for the proof of Theorem \ref{thm:gen2} is a 
result  due to Keller and
Liverani \cite{kellerliverani2009} 
on the response of the leading eigenvalue of the transfer operator after small suitable perturbations.
In this  section we recall the result and we verify that the required assumptions hold in the class of systems we consider.

We
consider a Banach space $(\mathcal{B},\Vert \cdot \Vert )$ and we denote
with $\mathcal{B}^{\ast }$ its dual. Then let ${L}_{\epsilon }:%
\mathcal{B}\rightarrow \mathcal{B}$ be a family of uniformly bounded linear
operators, where $\epsilon \in E$, and $E$ is the interval $E=(0,\overline{\epsilon }]$ for some $\overline{\epsilon }>0.$
We suppose the following assmptions hold for the family ${L}_{\epsilon }:$

\begin{itemize}
\item[R1] The operators ${L}_{\epsilon},$ $\epsilon \in E,$ must
satisfy the spectral decomposition 
\begin{equation}  \label{specdec}
\lambda_{\epsilon}^{-1}{L}_{\epsilon}=\varphi _{\epsilon}\otimes \nu
_{\epsilon}+Q_{\epsilon}
\end{equation}
where $\lambda_{\epsilon}\in \mathbb{C},$ $\varphi _{\epsilon}\in \mathcal{B}
,\nu_{\epsilon}\in \mathcal{B}^{*},$ $Q_{\epsilon}: \mathcal{B} \rightarrow 
\mathcal{B}$ is a linear operator verifying 
\begin{equation}  \label{qqcc}
\sum_{n=0}^{\infty}\sup_{\epsilon\in E}\|Q_{\epsilon}^n\|<\infty.
\end{equation}

Moreover, ${L}_{\epsilon}\varphi_{\epsilon}=\lambda_{\epsilon}%
{\varphi_{\epsilon}},$ $\nu_{\epsilon}{L}_{\epsilon}=\lambda_{\epsilon}\nu_{\epsilon},$ $\nu_{\epsilon}(\varphi_{\epsilon})=1,$ $%
\nu_{\epsilon}Q_{\epsilon}=0,$ $Q_{\epsilon}(\varphi_{\epsilon})=0.$\label%
{eq:B5}

We also require that $\nu_{0}(\varphi_{\epsilon})=1$ and 
\begin{equation}  \label{coon}
\sup_{\epsilon \in E}\|\varphi_{\epsilon}\|<\infty.
\end{equation}
\end{itemize}

\begin{itemize}
\item[R2] When $\varepsilon $ is small, ${L}_{\varepsilon }$ is a
small perturbation of ${L}_{0}$, in the following sense:

\begin{equation*}
\pi_{\epsilon}:=\sup_{f\in \mathcal{B}, \|f\|\le 1}|\nu_0(({L}_0-%
{L}_{\epsilon})(f))|\rightarrow 0, \ \epsilon\rightarrow 0.
\end{equation*}
\end{itemize}

\begin{itemize}
\item[R3] We now set 
\begin{equation*}
\Delta_{\epsilon}:=\nu_0(({L}_0-{L}_{\epsilon})(\varphi_0)).
\end{equation*}
Then we require that there exists $C_0\geq 0 $ such that
\begin{equation*}
\pi_{\epsilon}\|({L}_0-{L}_{\epsilon})\varphi_0\|\le \ C_0 \ |\Delta_{\epsilon}|.
\end{equation*}

\item[R4] Let us consider the  quantities $q_{k,\varepsilon }$ defined by
\begin{equation*}
q_{k,\varepsilon }:=\frac{\nu _{0}(({L}_{0}-{L}_{\varepsilon
}){L}_{\varepsilon }^{k}({L}_{0}-{L}_{\varepsilon
})(\varphi_0))}{ \Delta_{\epsilon}}.
\end{equation*}
We will assume that for each $k\geq 0$ the following limit exists 
\begin{equation}  \label{eee}
\lim_{\epsilon \rightarrow 0}q_{k,\varepsilon }=q_k,
\end{equation}
and we define the \emph{extremal index of the system} as 
\begin{equation}  \label{extremal}
\theta=1-\sum_{k=0}^{\infty}q_k.
\end{equation}
\end{itemize}
Under these assumptions, the main result of \cite{kellerliverani2009} shows that
\begin{proposition}
\label{thm:repfo} If $R1-R4$ are satisfied then 
\begin{equation}  \label{eq:repfoexp}
\lambda_{\epsilon} = 1 - \theta \Delta_{\epsilon} + o( \Delta_{\epsilon}).
\end{equation}
\end{proposition}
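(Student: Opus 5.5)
The idea is to write $\lambda_\epsilon$ as an exact function of $\Delta_\epsilon$ and the resolvent of $L_\epsilon$, and then expand that expression using R1--R4, following the Keller--Liverani scheme.

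\textbf{Step 1: an exact identity for $\lambda_\epsilon$.} By R1, $\nu_\epsilon L_\epsilon=\lambda_\epsilon\nu_\epsilon$ and $L_\epsilon\varphi_\epsilon=\lambda_\epsilon\varphi_\epsilon$. We also have $\nu_0 L_0=\nu_0$, since the unperturbed leading eigenvalue is $1$. Pairing $\nu_0$ with $(L_0-L_\epsilon)\varphi_\epsilon$ and using the normalization $\nu_0(\varphi_\epsilon)=1$ gives
\[
1-\lambda_\epsilon=\nu_0\bigl((L_0-L_\epsilon)\varphi_\epsilon\bigr).
\]
We then decompose $\varphi_\epsilon$ in terms of $\varphi_0$. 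From the spectral decomposition one has, for $k\ge0$,
\[
\lambda_\epsilon^{-k-1}L_\epsilon^{k+1}\varphi_0=\nu_\epsilon(\varphi_0)\varphi_\epsilon+Q_\epsilon^{k+1}\varphi_0 .
\]
Writing $L_\epsilon\varphi_0=\varphi_0-(L_0-L_\epsilon)\varphi_0$ and telescoping yields a Neumann-type expansion
\[
\nu_\epsilon(\varphi_0)\varphi_\epsilon=\varphi_0-\sum_{k\ge0}\lambda_\epsilon^{-k-1}L_\epsilon^{k}(L_0-L_\epsilon)\varphi_0 \;+\;(\text{remainder}).
\]
Convergence of the series and control of the remainder come from $\sum_n\sup_\epsilon\|Q_\epsilon^n\|<\infty$ together with $\sup_\epsilon\|\varphi_\epsilon\|<\infty$.

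\textbf{Step 2: plug the expansion into the identity.} Inserting the decomposition into $1-\lambda_\epsilon=\nu_0((L_0-L_\epsilon)\varphi_\epsilon)$ gives
\[
\nu_\epsilon(\varphi_0)(1-\lambda_\epsilon)=\Delta_\epsilon-\sum_{k=0}^{N-1}\lambda_\epsilon^{-k-1}\Delta_\epsilon q_{k,\epsilon}+E_{N,\epsilon}.
\]
The tail $E_{N,\epsilon}$ is bounded using R2 and R3. For the part of $L_\epsilon^k$ that is not projected onto $\varphi_\epsilon$, one has $|\nu_0((L_0-L_\epsilon)Q_\epsilon^k(L_0-L_\epsilon)\varphi_0)|\le\pi_\epsilon\|Q_\epsilon^k\|\,\|(L_0-L_\epsilon)\varphi_0\|\le C_0\|Q_\epsilon^k\|\,|\Delta_\epsilon|$. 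Summability of $\|Q_\epsilon^k\|$ then makes the tail $\sum_{k\ge N}$ of size $o_N(1)|\Delta_\epsilon|$, uniformly in $\epsilon$. The part projected onto $\varphi_\epsilon$ contributes terms of order $(1-\lambda_\epsilon)\cdot O(\pi_\epsilon)$, which are absorbed on the left-hand side.

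\textbf{Step 3: preliminary estimates.} We need $\lambda_\epsilon\to1$ and $\nu_\epsilon(\varphi_0)\to1$. Both follow from R2: first $|1-\lambda_\epsilon|\le\pi_\epsilon\sup\|\varphi_\epsilon\|\to0$, and $\nu_\epsilon(\varphi_0)\to1$ follows similarly from the decomposition.

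\textbf{Step 4: pass to the limit.} Divide by $\Delta_\epsilon$ and let $\epsilon\to0$ using R4 ($q_{k,\epsilon}\to q_k$). Then let $N\to\infty$. This gives $(1-\lambda_\epsilon)/\Delta_\epsilon\to1-\sum_k q_k=\theta$, which is the claim.

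\textbf{Main obstacle.} The delicate step is the uniform-in-$\epsilon$ tail control in Step 2. There the quantity $\lambda_\epsilon^{-k}L_\epsilon^k$ has to be split into $\varphi_\epsilon\otimes\nu_\epsilon+Q_\epsilon^k$, and the projected piece must be shown to be $o(\Delta_\epsilon)$ rather than $O(\Delta_\epsilon)$. I would first try to bound it by $|\nu_0((L_0-L_\epsilon)\varphi_\epsilon)|\cdot|\nu_\epsilon((L_0-L_\epsilon)\varphi_0)|$. The first factor equals $1-\lambda_\epsilon$ and the second is $O(\pi_\epsilon)$ by the duality form of R2, so this product is $o(1-\lambda_\epsilon)$ and can be moved to the left side. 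This is exactly the original Keller--Liverani argument, which the paper cites in \cite{kellerliverani2009}, and I would follow it.
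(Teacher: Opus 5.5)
The paper gives no proof of this proposition; it quotes the main theorem of \cite{kellerliverani2009}, and your outline follows that argument. The identity $1-\lambda_\epsilon=\nu_0((L_0-L_\epsilon)\varphi_\epsilon)$ and the bound $|1-\lambda_\epsilon|\le \pi_\epsilon\sup_\epsilon\|\varphi_\epsilon\|$ are correct. The gap is in the bookkeeping of Steps~1--2, which is where the content lies.

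\textbf{The series diverges.} The series $\sum_{k\ge 0}\lambda_\epsilon^{-k-1}L_\epsilon^k(L_0-L_\epsilon)\varphi_0$ does not converge. For $k\ge 1$ one has $\lambda_\epsilon^{-k-1}L_\epsilon^k=\lambda_\epsilon^{-1}(\varphi_\epsilon\otimes\nu_\epsilon+Q_\epsilon^k)$, and exactly $\nu_\epsilon((L_0-L_\epsilon)\varphi_0)=(1-\lambda_\epsilon)\nu_\epsilon(\varphi_0)$. So every term contains the same vector $\lambda_\epsilon^{-1}(1-\lambda_\epsilon)\nu_\epsilon(\varphi_0)\varphi_\epsilon$, which is nonzero as soon as $\lambda_\epsilon\neq 1$. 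Your ``remainder'' is correspondingly $(1-\lambda_\epsilon)\sum_{k<N}\lambda_\epsilon^{-k-1}L_\epsilon^k\varphi_0-Q_\epsilon^N\varphi_0$. Its projected part grows linearly in $N$, so it is not controlled by $\sum_n\sup_\epsilon\|Q^n_\epsilon\|$ and $\sup_\epsilon\|\varphi_\epsilon\|$.

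\textbf{The absorption step fails.} For the same reason, your resolution of the ``main obstacle'' does not work. Each projected contribution $\lambda_\epsilon^{-1}(1-\lambda_\epsilon)^2\nu_\epsilon(\varphi_0)$ is indeed $o(1-\lambda_\epsilon)$. But the tail $k\ge N$ contains infinitely many of them, and their sum diverges, so it cannot be absorbed into the left-hand side. If you instead stop the telescoping at $N$, you are left with $\nu_0((L_0-L_\epsilon)Q^N_\epsilon\varphi_0)$. Its naive bound $\pi_\epsilon\|Q^N_\epsilon\|\|\varphi_0\|$ need not be $O(|\Delta_\epsilon|)$, because R3 controls $\pi_\epsilon\|(L_0-L_\epsilon)\varphi_0\|$, not $\pi_\epsilon\|\varphi_0\|$.

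\textbf{The fix.} The missing observation is that the projected pieces cancel exactly against the $(1-\lambda_\epsilon)$-terms. Since $\lambda_\epsilon^{-1}L_\epsilon=\varphi_\epsilon\otimes\nu_\epsilon+Q_\epsilon$ and $Q_\epsilon\varphi_\epsilon=0$, telescoping gives $\sum_{k\ge0}Q_\epsilon^k(I-\lambda_\epsilon^{-1}L_\epsilon)=I-\varphi_\epsilon\otimes\nu_\epsilon$, which converges. Applied to $\varphi_0$, this yields
\[
\nu_\epsilon(\varphi_0)\varphi_\epsilon=\varphi_0-\lambda_\epsilon^{-1}\sum_{k\ge0}Q_\epsilon^k(L_0-L_\epsilon)\varphi_0+\lambda_\epsilon^{-1}(1-\lambda_\epsilon)\sum_{k\ge0}Q_\epsilon^k\varphi_0 ,
\]
in which only powers of $Q_\epsilon$ occur. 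Now pair this with $\nu_0((L_0-L_\epsilon)\,\cdot\,)$:
\begin{itemize}
\item the last sum contributes $O(|1-\lambda_\epsilon|\pi_\epsilon)$;
\item the terms with $k\ge N$ are bounded by $C_0|\lambda_\epsilon|^{-1}|\Delta_\epsilon|\sum_{k\ge N}\sup_\epsilon\|Q^k_\epsilon\|$, using R2--R3;
\item only for the finitely many $1\le k<N$ do you substitute $Q_\epsilon^k=\lambda_\epsilon^{-k}L_\epsilon^k-\varphi_\epsilon\otimes\nu_\epsilon$.
\end{itemize}
The last substitution produces $\lambda_\epsilon^{-k}\Delta_\epsilon q_{k,\epsilon}$ plus a total error $(N-1)(1-\lambda_\epsilon)^2\nu_\epsilon(\varphi_0)$. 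For fixed $N$ this is $o(1-\lambda_\epsilon)$ and can now legitimately be moved to the left, after which your Step~4 goes through.

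Applying $\nu_0$ to the same identity, together with R2 and the bound $\|(L_0-L_\epsilon)\varphi_0\|\le C_0\|\varphi_0\|$ from R3, also proves $\nu_\epsilon(\varphi_0)\to 1$, which your Step~3 only asserts. Finally, the bound $\nu_\epsilon((L_0-L_\epsilon)\varphi_0)=O(\pi_\epsilon)$ does not follow from R2, which concerns $\nu_0$. It follows from the exact formula $\nu_\epsilon((L_0-L_\epsilon)\varphi_0)=(1-\lambda_\epsilon)\nu_\epsilon(\varphi_0)$.
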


\subsection{Verifying the assumptions in our case}\label{monu}

In this section we show that the assumptions of Proposition \ref{thm:repfo} are verified in our case, where the Banach space considered is $BV_{A,\lambda}$, the unperturbed operator is $L$ and perturbed operators are the operators $L_n$.

\medskip

\noindent \textbf{Assumption R1}. 
The sought for spectral decomposition for the perturbed and unperturbed operators will be provided by  the classical result of \cite{kellerliverani1999}.
In this context we will consider $BV_{A,\lambda}$ and $L^1$ as a strong and weak space. To apply the approach of \cite{kellerliverani1999} we need that a uniform Lasota Yorke inequality be satisfied. This is proved in Proposition \ref{1eq:reg-q-step} and Lemma \ref{lem:LY-punctured-q1}.

A compact immersion of the strong space into the weak one is also needed, and this is  proved at Theorem \ref{thm:embeddingRD}. 
In this context, the results of \cite{kellerliverani1999}  provide the stability of the spectral picture when the operator perturbation is small in a mixed norm.
More precisely, we have to verify that
\begin{equation}  \label{vbn}
||{L}-{L}_{n}||_{BV_{A,\lambda}\to L^1}\rightarrow
0, as \  n\rightarrow \infty.
\end{equation}
Indeed we have
\begin{equation}\label{biv} ||(L_{n}-L)f||_{L^1}=\|1_{B_{n}}({L} f)\|_{L^{1}} \leq \text{Leb}(B_n)||{L} f||_\infty.\leq C_2 \text{Leb}(B_n)||f||_1.  
\end{equation}
This allows us to apply the main results of \cite{kellerliverani1999}. 
The spectral decomposition \eqref{specdec} then follows from Theorem 1 of \cite{kellerliverani1999}. 
We remark  that in our case $\lambda_0=1$ is a simple eigenvalue of ${L}$ because of the uniqueness assumption e) on the stationary measure $\mu$. 

By Corollary 2, part 2 in \cite{kellerliverani1999}, we can deduce (\ref{qqcc}).
Finally \eqref{coon}  follows from the uniform Lasota-Yorke inequality
proved in Lemma \ref{lem:LY-punctured-q1}.

\medskip

\noindent \textbf{Assumption R2}. 
The transfer operator $ L$ preserves the integral, then  $\nu_0$ in our case is the integral with respect to the Lebesgue measure. Since the integral of a function is bounded by its  $L^1$ norm, hence the computation done at \eqref{biv} also establishes R2.

\medskip

\noindent In our setting, \textbf{Assumption R3} can be stated in the
following way:
 Let $f_{0}$ denote the density of the stationary
measure $\mu $.
By Proposition \ref{lem:LY-punctured-q1} the quantity $\Vert ({L}%
-{L}_{n})f_{0}\Vert _{BV_{A,\lambda }}=\Vert 1_{B_{n}}{L}f_{0}\Vert _{BV_{A,\lambda }}$ is bounded by a constant $\tilde{K}.$
We saw in \eqref{biv} that $\pi _{n} \leq C \text{Leb}(B_{n})$ for some $C\geq 0$.
Since we assumed $f_0(x_0)>0$, and $f_0\in C^1$ we also get that $\Delta_\epsilon \geq  C_2\text{Leb}(B_{n})$, for some $C_2\geq 0$ thus there is $C'\geq 0 $ such that

\begin{equation*}
\frac{\pi _{n}\Vert ({L}-{L}_{n})f_{0}\Vert _{BV_{A,\lambda
}}}{\Delta_\epsilon}\leq C^{\prime }.
\end{equation*}

\medskip

\noindent \textbf{Assumption R4} needs some more work. The next proposition
will show that all the quantities $q_k$ defined in the Assumption R4 are
equal to $0.$ In the present setting they are defined as the limit for $%
n\rightarrow \infty$ of the following quantities: 
\begin{equation*}
q_{k,n}=\frac{\int ({L}-{L}_{n}){L}^k_{n}(%
{L} -{L}_{n})(f_0)dm }{\mu(B_n)}
\end{equation*}
where $m$ is the Lebesgue measure on $\mathbb{R}^d$.

\begin{proposition}
\label{prop:R7}

For each $k\geq 0$ we have 
\begin{equation*}
\lim_{n \rightarrow \infty}q_{k,n }=0.
\end{equation*}
\end{proposition}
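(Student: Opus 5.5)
We bound the numerator of $q_{k,n}$ directly, using positivity and the regularization estimate \eqref{C^1}. Since $L$ preserves the integral and $L-L_n=H_nL$ with $H_n g=\mathbf 1_{B_n}g$, the numerator reads
\[
N_{k,n}:=\int \mathbf 1_{B_n}\,L\,L_n^k\,\bigl(\mathbf 1_{B_n}Lf_0\bigr)\,dm ,
\]
and $Lf_0=f_0$ because $f_0$ is the stationary density. Put $g_n:=\mathbf 1_{B_n}f_0\ge 0$. Then $\|g_n\|_{L^1}=\mu(B_n)$. All operators involved are positive, and $L_n g\le Lg$ pointwise for $g\ge0$. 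Hence
\[
0\le N_{k,n}\le \int_{B_n} L^{k+1}g_n\,dm .
\]
This bound uses only the operators $L$ and the indicator of the target, not the hole operators themselves.

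Next we apply \eqref{C^1}, which gives $\|Lh\|_{C^1}\le M_{reg}\|h\|_{L^1}$, and in particular $\|Lh\|_{L^\infty}\le M_{reg}\|h\|_{L^1}$. Applying it to $h=L^{k}g_n$ and using that $L$ is an $L^1$ contraction on nonnegative functions, we get
\[
\|L^{k+1}g_n\|_{L^\infty}\le M_{reg}\,\|L^k g_n\|_{L^1}\le M_{reg}\,\mu(B_n).
\]
Integrating over $B_n$ then yields
\[
N_{k,n}\le M_{reg}\,\mu(B_n)\,\mathrm{Leb}(B_n),
\]
and therefore
\[
0\le q_{k,n}\le M_{reg}\,\mathrm{Leb}(B_n)=M_{reg}\,\omega_d r_n^d .
\]

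It remains to check that $r_n\to0$, so that $\mathrm{Leb}(B_n)\to0$. Since $n\mu(B_n)\to\tau$, we have $\mu(B_n)\to0$. The density $f_0$ is positive at $x_0$, and it is continuous there because $f_0=Lf_0\in C^1$ by \eqref{C^1}. So $f_0\ge c>0$ on a neighbourhood of $x_0$. This gives $\mu(B_n)\ge c\,\omega_d r_n^d$ once $r_n$ is small, and also shows that $r_n$ cannot stay bounded away from $0$. Hence $r_n\to0$ and $q_{k,n}\to0$ for each fixed $k$. The bound is uniform in $k$, but only the limit for each fixed $k$ is needed.

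The main point of care is the domination $L_n^k\le L^k$ on nonnegative functions, together with the fact that the first application of $L$ converts the $L^1$ mass $\mu(B_n)$ into an $L^\infty$ bound. This conversion is exactly where the smoothing noise enters. For $k=0$ the argument is the same estimate with $L^{1}$ in place of $L^{k+1}$, and nothing further is needed. With all $q_k=0$, the extremal index is $\theta=1$, and Proposition \ref{thm:repfo} gives $\lambda_n=1-\mu(B_n)+o(\mu(B_n))$.
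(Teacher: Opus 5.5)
Your proof is correct and takes essentially the same route as the paper. Both write the numerator as $\int_{B_n} L(\cdot)\,dm$, bound it by $\mathrm{Leb}(B_n)$ times an $L^\infty$ norm, convert that to an $L^1$ norm via $\|Lh\|_{C^1}\le M_{reg}\|h\|_{L^1}$, and bound the $L^1$ mass by $\mu(B_n)$. The only differences are minor: the paper applies the $L^1$ weak contraction of $L_n$ directly where you dominate $L_n^k$ by $L^k$, and your check that $r_n\to0$ (via continuity of $f_0=Lf_0$ at $x_0$) fills in a point the paper leaves implicit.
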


\begin{proof}
Let us consider $f_{k,n}:=\frac{{L}^k_{n}({L}-{L}_{n})(f_0)}{\mu(B_n)};$ we have
$$
\lim_{n\rightarrow \infty}q_{k,n } = \lim_{n
\rightarrow \infty}\int ({L}-L_{n })f_{k,n } dm
= \lim_{n \rightarrow \infty}\int 1_{B_{n }}{L} f_{k,n }dm\le \lim_{n \rightarrow \infty}m(B_n)\|{L} f_{k,n}\|_{\infty}.
$$  
Again, by \eqref{C^1}, the $C^1$ norm of ${L}f_{k,n},$ and therefore its infinity norm, is bounded by $M_{reg}\|f_{k,n}\|_{L^1}$. 
  On the other hand 
\[\|f_{k,n}\|_{L^1}=\int f_{k,n} dm \leq \frac{1}{\mu(B_n)}\int_{B_n}f_0dm=1. \]
Then we get $\lim_{n\rightarrow \infty}q_{k,n }= 0.$ \end{proof}

\subsection{Proof of Theorem \protect\ref{thm:gen2}}

\label{Pgen2}\label{app:repfo2} In this section we can collect all the
previous estimates and finally prove the main result of the paper.

\begin{proof}[Proof of Theorem \protect\ref{thm:gen2}]
The proof is similar to the proof of Theorem 1 in \cite{OTHERPAPER}, but with some adaptation to our case. For this reason we report it completely.
We rewrite the main formula in left hand of \eqref{eq:main-hitting-law}, using the notation above as {\ 
\begin{equation}  \label{eq:fromPtoE}
\begin{aligned} & \mathbb{P}\otimes \mu \left( X_{{k}}(x)\in B_{n}^{c}\text{
for every }k=0,\ldots,n -1 \right) \\ &= \int_{\Omega \times \mathbb{R}^D}
1_{B_{n}^{c}} \left(X_{{0}} (x,\omega)\right) \cdots 1_{B_{n}^{c}} \left(
X_{{n-1}}(x,\omega)\right) d\omega d\mu(x) \\ & = \int_{\mathbb{R}^D}
\bE\left[ 1_{B_{n}^{c}} \left(X_{{0}}(x) \right) \cdots 1_{B_{n}^{c}}
\left( X_{{n-1}}(x) \right) \right] d\mu(x) . \end{aligned}
\end{equation}
}


By using Proposition \ref{p5} and (\ref{mod}), 
recalling that the invariant measure $\mu$ has density $f_{0}$, we can rewrite the last equation in operator-like way as 
\begin{equation}  \label{pe}
\begin{split}
& \int_{\mathbb{R}^D} \bE\left[ 1_{B_{n}^{c}} (X_{{0}}(x) ) \cdots 1_{B_{n}^{c}}
(X_{{n-1}}(x) )1(X_{{n-1}}(x) ) \right] f_0(x) dx \\
& = \int \left(
{ P}^n_{n }\right)(1)(x)f_0(x)dx= \int {L}_{n}^{n}f_0dx
\end{split}%
\end{equation}
where $P$ is the Koopman operator as in Definition \ref{def:koopman-new} and ${ L_n}$ is the perturbed operator as in Defition \ref{6}. 
Now we apply Proposition \ref{thm:repfo} to ${ L}$
and to its perturbations ${ L}_{n}$. We consider as a strong space the
space $\cB = BV_{A,\lambda}$ and $L^1$ as a weak space. The
assumptions to apply Proposition \ref{thm:repfo} are verified  in section \ref{monu}; therefore we have the following spectral decomposition
for the  operators ${L}_{n}:$
\begin{equation}  \label{eq:decomposition}
\lambda_{n}^{-1} {L}_{n} = f_{n} \otimes \mu_{n} + Q_{n}
\end{equation}
where $f_{n}\in BV_{\lambda,A } , \ \mu_{n} \in (BV_{\lambda, A} )^{\prime }$ and $Q_{n} :
BV_{\lambda,A} \to BV_{\lambda,A}$ is a bounded operator with spectral radius uniformly bounded
as $n$ varies by some $\rho<1.$ Moreover ${L}_{n}f_{n}=\lambda_{n}f_{n}$
and $\mu_{n}{L}_{n}=\lambda_{n}\mu_{n}.$\newline

By denoting with $\langle \mu_{n},g\rangle$ the action of the linear
functional $\mu_{n}$ over $g\in BV_{A,\lambda},$ we have
\begin{equation}  \label{eq:perturbation}
{L}_{n} g=\lambda_{n} f_{n} \langle \mu_{n},g\rangle + \lambda_{n}
Q_{n} (g);
\end{equation}
 we recall the normalization $\int f_{n} dx=1$ and $\langle
\mu_{n},f_{n}\rangle=1.$

We now use \eqref{eq:perturbation} to estimate $\int ({L}_{n}^{n} f_0)(x) dx $. Since we have a direct sum
decomposition for the operator, we can iterate and get 
\begin{equation}  \label{eq:naction}
\int ({L}_{n}^{n} f_0)(x) \ dx =\lambda^{n}_{n} \langle
\mu_{n},f_0\rangle + \lambda^{n}_{n}\int (Q^n_{n} f_0)(x)\ dx.
\end{equation}

Now, $1$ is the largest unique eigenvalue of the unperturbed operator ${L}$. By Proposition\ref{prop:R7} we have $%
\theta=1.$
Applying Proposition\ref{thm:repfo} we get: 
\begin{equation}  \label{eq:leadingperturb}
\lambda_n=1- \mu(B_n)+o(\mu(B_n)).
\end{equation}

Then by substituting \eqref{eq:leadingperturb} in \eqref{eq:naction} we have 
\begin{equation}  \label{eq:perturbation3}
\begin{split}
\int ({L}_{n}^{n} f_0)(x) \ dx & =e^{n\log(1- \mu(B_n)+o(\mu(B_n)))}\left[%
\langle \mu_{h,n},f_0\rangle + \int Q^n_{n}f_0\ dx \right] \\
& =e^{-n \mu(B_n)+no(\mu(B_n))}\left[\langle \mu_n,f_0\rangle + \int
Q^n_{n} f_0\ dx\right].
\end{split}%
\end{equation}

Let us now recall that by \cite[Lemma 6.1]{kellerliverani1999} we have $\langle
\mu_n,f_0\rangle \rightarrow 1.$ Moreover by \eqref{qqcc} we get a uniform
exponential convergence to zero of 
\begin{equation*}
\int Q^n_{n} f_0\ dx \leq \|Q^n_{n} f_0\|_{L^1}\le \|Q^n_{n}
f_0\|_{BV_2} \le \text{Const} \ \rho^n.
\end{equation*}
Now, as assumed in the statement of Theorem \ref{thm:gen2}, we choose
the sequence $\{ u_n \}_{n \in \bN}$ and a $\tau \in \bR, \tau > 0$ such
that 
\begin{equation}  \label{eq:hyplimit}
n\, \mu(B_n)\rightarrow \tau.
\end{equation}
Thus 
\begin{equation}  \label{eq:stop}
\int {L}_{n}^n f_0 \ dx\rightarrow e^{-\tau}
\end{equation}
proving the claim. 
\end{proof}

\section{Appendix: Dissipative dynamics for Gaussian and subGamma noises in $\mathbb{R}^d$}
\label{fosterlyapunov}

In this appendix we consider random perturbations of suitably dissipative maps on \(\mathbb R^d\) by additive  noise. We prove that the dissipativity
of the deterministic dynamics implies an exponential Foster--Lyapunov condition, as required to establish the assumption (b) in our main framework.

Let $X=\mathbb{R}^d$; 
let $T:\mathbb{R}^d\to\mathbb{R}^d$ be a measurable map.
Assume that $T$ is dissipative in the following sense:
\begin{equation}\label{eq:dissip-d}
\exists\,R_0>0,\ C>0\ \ \text{such that}\ \ \|x\|\ge R_0\ \Longrightarrow\ \|T(x)\|\le \|x\|-C.
\end{equation}
Assume also local boundedness:
\begin{equation}\label{eq:local-bdd-d}
\forall R>0,\qquad \sup_{\|x\|\le R} \|T(x)\|<\infty.
\end{equation}
Let $(\xi_n)_{n\ge 1}$ be an i.i.d.\ sequence of random vectors.

\medskip

Let us consider the following random iteration 
\[
X_{n+1}=T(X_n)+\xi_{n+1}.
\]

For bounded measurable observables $\varphi:\mathbb{R}^d\to\mathbb{R}$, the annealed Koopman operator in this case can be written as
\begin{equation}\label{eq:P-def-d}
(P\varphi)(x)
:=\mathbb{E}\big[\varphi(T(x)+\xi)\big]
=\int_{\mathbb{R}^d} \varphi(T(x)+z)\,\rho(z)\,dz,
\end{equation}
where
$\rho$ is the density of $\xi$.

The annealed transfer operator on densities, in this case can be written as follows.
Let $m$ be Lebesgue measure on $\mathbb{R}^d$. The Markov kernel associated to \eqref{eq:P-def-d}
has transition density
\[
k(x,y)=\rho\bigl(y-T(x)\bigr),
\qquad\text{so that}\qquad
\mathbb{P}(X_{n+1}\in dy\mid X_n=x)=k(x,y)\,dy.
\]
The corresponding transfer operator (acting on densities $f\in L^1(\mathbb{R}^d)$) is
\begin{equation}\label{eq:L-def-d}
(Lf)(y)
:=\int_{\mathbb{R}^d} f(x)\,k(x,y)\,dx
=\int_{\mathbb{R}^d} f(x)\,\rho\bigl(y-T(x)\bigr)\,dx.
\end{equation}
Fix $\alpha>0$ and consider the exponential Lyapunov weight
\begin{equation}\label{eq:V-def-d}
V(x):=\exp\big(\alpha\|x\|\big),\qquad x\in\mathbb{R}^d.
\end{equation}
Our goal in this section is to prove a Foster--Lyapunov drift condition
\begin{equation}\label{eq:FL-goal-d}
PV(x)\le \lambda V(x)+b
\qquad \text{for all } x\in\mathbb{R}^d,
\end{equation}
for some $\lambda\in(0,1)$ and $b<\infty$. For this we first estimate the Lyapunov function average  on a Gaussian distribution in Section \ref{gaussiancase} and we then look into the case of sub-Gamma distributions in Section \ref{extension}.

\subsection{Additive Gaussian noise on $\mathbb{R}^d$ and an exponential Lyapunov weight}
\label{gaussiancase}

\begin{lemma}\label{lem:gauss-shift-d}
Let $\xi\sim\mathcal{N}(0,\sigma^2 I_d)$ and let $\mu\in\mathbb{R}^d$ with $r:=\|\mu\|>0$.
Fix $\alpha>0$ and assume $\alpha\sigma^2<r$. Then
\begin{equation}\label{eq:gauss-shift-d-bound}
\mathbb{E}\exp\big(\alpha\|\mu+\xi\|\big)
\le
\exp(\alpha r)\,
\Bigl(1-\frac{\alpha\sigma^2}{r}\Bigr)^{-d/2}
\exp\!\Bigl(\frac{\alpha^2\sigma^2}{2\bigl(1-\frac{\alpha\sigma^2}{r}\bigr)}\Bigr).
\end{equation}
In particular, as $r\to\infty$ the second term in the right hand of \eqref{eq:gauss-shift-d-bound} converges to $\exp(\alpha^2\sigma^2/2)$.
\end{lemma}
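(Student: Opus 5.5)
The plan is to linearize the norm around $\mu$ so that the exponent becomes a quadratic polynomial in $\xi$, and then evaluate the resulting Gaussian integral exactly. Write $u:=\mu/r$ for the unit vector in the direction of $\mu$. The key elementary inequality is
\[
\sqrt{r^2+a}\le r+\frac{a}{2r}\qquad\text{whenever } r^2+a\ge 0 ,
\]
which follows from $(r+a/(2r))^2=r^2+a+a^2/(4r^2)\ge r^2+a$, together with $r+a/(2r)\ge 0$ when $a\ge -r^2$. Applying it with $a=2\langle \mu,\xi\rangle+\|\xi\|^2$, so that $r^2+a=\|\mu+\xi\|^2\ge 0$, gives the pointwise bound
\[
\|\mu+\xi\|\le r+\langle u,\xi\rangle+\frac{\|\xi\|^2}{2r}.
\]
Multiplying by $\alpha>0$, exponentiating and taking expectations then yields
\[
\mathbb{E}\exp\big(\alpha\|\mu+\xi\|\big)\le e^{\alpha r}\,\mathbb{E}\exp\Big(\alpha\langle u,\xi\rangle+\frac{\alpha}{2r}\|\xi\|^2\Big).
\]

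The second step is an exact Gaussian computation. Against the density $(2\pi\sigma^2)^{-d/2}e^{-\|z\|^2/(2\sigma^2)}$, the quadratic term in the exponent becomes
\[
-\frac{1}{2\sigma^2}\Big(1-\frac{\alpha\sigma^2}{r}\Big)\|z\|^2 .
\]
This is where the hypothesis $\alpha\sigma^2<r$ enters: it makes the coefficient negative, so the integral is finite. Set
\[
s^2:=\frac{\sigma^2}{1-\alpha\sigma^2/r}.
\]
The integral is then $(s^2/\sigma^2)^{d/2}$ times $\mathbb{E}\,e^{\alpha\langle u,\eta\rangle}$ with $\eta\sim\mathcal N(0,s^2I_d)$. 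Since $\|u\|=1$, the Gaussian moment generating function gives $\mathbb{E}\,e^{\alpha\langle u,\eta\rangle}=e^{\alpha^2 s^2/2}$. Substituting the value of $s^2$ produces exactly
\[
\Big(1-\frac{\alpha\sigma^2}{r}\Big)^{-d/2}\exp\Big(\frac{\alpha^2\sigma^2}{2(1-\alpha\sigma^2/r)}\Big),
\]
which is \eqref{eq:gauss-shift-d-bound}.

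The final claim, that the second term tends to $\exp(\alpha^2\sigma^2/2)$ as $r\to\infty$, follows by continuity, since $\alpha\sigma^2/r\to 0$.

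The only delicate point is the first step. One must choose a linearization that keeps a quadratic term in $\|\xi\|$ with the right coefficient: a purely linear bound would be false, and a cruder bound would not match the stated constants. One must also check that the square-root inequality is used only where $r^2+a\ge 0$, which holds automatically here because $r^2+a$ is a squared norm. Everything after that is routine completion of the square.
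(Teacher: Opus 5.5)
Your proof is correct and follows the paper's argument: the same concavity bound $\|\mu+\xi\|\le r+\langle u,\xi\rangle+\|\xi\|^2/(2r)$, followed by an exact Gaussian computation. The only cosmetic difference is that you evaluate the Gaussian integral in one step, absorbing the quadratic term into a rescaled Gaussian of variance $s^2=\sigma^2/(1-\alpha\sigma^2/r)$, whereas the paper splits $\xi=Zu+\eta$ and multiplies a one-dimensional completed square by the $\chi^2$ moment generating function of $\|\eta\|^2$.
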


\begin{proof}
Let $r=\|\mu\|>0$. Using the concavity inequality
$\sqrt{a+b}\le \sqrt{a}+\frac{b}{2\sqrt{a}}$ with $a=r^2$ and $b=2\langle \mu,\xi\rangle+\|\xi\|^2$ yields
\[
\|\mu+\xi\|
=\sqrt{r^2+2\langle \mu,\xi\rangle+\|\xi\|^2}
\le r + \Big\langle \frac{\mu}{r},\xi\Big\rangle + \frac{\|\xi\|^2}{2r}.
\]
Therefore,
\[
e^{\alpha\|\mu+\xi\|}
\le
e^{\alpha r}\,
\exp\!\Big(\alpha\Big\langle \frac{\mu}{r},\xi\Big\rangle+\frac{\alpha}{2r}\|\xi\|^2\Big).
\]
Let $u:=\mu/r$ so that $\|u\|=1$, and decompose $\xi=Zu+\eta$ where
$Z=\langle u,\xi\rangle\sim \mathcal{N}(0,\sigma^2)$ and $\eta$ is an independent Gaussian in $u^\perp\cong\mathbb{R}^{d-1}$
with law $\mathcal{N}(0,\sigma^2 I_{d-1})$. Then $\|\xi\|^2=Z^2+\|\eta\|^2$ and
\[
\alpha\Big\langle \frac{\mu}{r},\xi\Big\rangle+\frac{\alpha}{2r}\|\xi\|^2
=
\alpha Z + \frac{\alpha}{2r}Z^2 + \frac{\alpha}{2r}\|\eta\|^2.
\]
Hence the expectation factorizes:
\[
\mathbb{E}\exp\!\Big(\alpha Z+\frac{\alpha}{2r}Z^2\Big)\cdot
\mathbb{E}\exp\!\Big(\frac{\alpha}{2r}\|\eta\|^2\Big).
\]
These are standard Gaussian moment generating functions. Setting $t:=\frac{\alpha}{2r}$, the condition $\alpha\sigma^2<r$
is exactly $1-2t\sigma^2>0$. One computes (by completing the square in $Z$ and using the $\chi^2$ mgf for $\|\eta\|^2$):
\[
\mathbb{E}\exp\!\Big(\alpha Z+tZ^2\Big)
=
(1-2t\sigma^2)^{-1/2}\exp\!\Big(\frac{\alpha^2\sigma^2}{2(1-2t\sigma^2)}\Big),
\]
\[
\mathbb{E}\exp\!\big(t\|\eta\|^2\big)=(1-2t\sigma^2)^{-(d-1)/2}.
\]
Multiplying and recalling $1-2t\sigma^2=1-\frac{\alpha\sigma^2}{r}$ gives \eqref{eq:gauss-shift-d-bound}.
\end{proof}

Now we can prove the Foster-Lyapunov condition for $V(x)=e^{\alpha\|x\|}$ with a suitable $\alpha$.

\begin{proposition}[Exponential Foster--Lyapunov condition on $\mathbb{R}^d$]\label{prop:FL-exp-d}
Assume \eqref{eq:dissip-d} and \eqref{eq:local-bdd-d}. Then there exist
$\alpha>0$, $\lambda\in(0,1)$ and $b<\infty$ such that for $V(x)=e^{\alpha\|x\|}$,
\[
PV(x)\le \lambda V(x)+b\qquad \forall x\in\mathbb{R}^d.
\]
\end{proposition}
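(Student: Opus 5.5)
The plan is to split $\mathbb{R}^d$ into a far region $\{\|x\|\ge R\}$ and a compact region $\{\|x\|<R\}$, with $R\ge R_0$ to be chosen large. On the far region I will use the dissipativity \eqref{eq:dissip-d} together with Lemma \ref{lem:gauss-shift-d}. On the compact region I will use local boundedness \eqref{eq:local-bdd-d}. Throughout, the noise is Gaussian $\xi\sim\mathcal N(0,\sigma^2 I_d)$.

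\emph{Far region.} Writing $P V(x)=\mathbb E\exp(\alpha\|T(x)+\xi\|)$, Lemma \ref{lem:gauss-shift-d} with $\mu=T(x)$ gives a bound of the form $e^{\alpha\|T(x)\|}\,G(\|T(x)\|)$. The factor
\[
G(r)=\Bigl(1-\tfrac{\alpha\sigma^2}{r}\Bigr)^{-d/2}\exp\!\Bigl(\tfrac{\alpha^2\sigma^2}{2(1-\alpha\sigma^2/r)}\Bigr)
\]
decreases to $e^{\alpha^2\sigma^2/2}$ as $r\to\infty$. The difficulty is that the lemma requires $r=\|T(x)\|>\alpha\sigma^2$, and dissipativity only bounds $\|T(x)\|$ from above, so $T(x)$ may be small even when $x$ is large.

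To handle this, I will bound $\|T(x)+\xi\|\le \max(\|T(x)\|,r_*)+\|\xi\|$ whenever $\|T(x)\|<r_*$. In that case
\[
PV(x)\le e^{\alpha r_*}\,\mathbb E e^{\alpha\|\xi\|}=:b_1<\infty,
\]
since Gaussian vectors have all exponential moments. Alternatively, monotonicity of $r\mapsto \mathbb E e^{\alpha\|\mu+\xi\|}$ in $\|\mu\|$ would give the same conclusion. So I fix $r_*:=2\alpha\sigma^2$ and treat the two cases separately. When $\|T(x)\|\ge r_*$, we have $G(\|T(x)\|)\le G(r_*)=2^{d/2}e^{\alpha^2\sigma^2}=:c_\alpha$. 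Combined with \eqref{eq:dissip-d}, this gives for $\|x\|\ge R_0$
\[
PV(x)\le c_\alpha e^{-\alpha C}\,V(x)+b_1 .
\]

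\emph{Key choice of $\alpha$.} We need $c_\alpha e^{-\alpha C}=2^{d/2}e^{\alpha^2\sigma^2-\alpha C}<1$. The factor $2^{d/2}$ prevents this for small $\alpha$, so I will instead let $r_*$ be a large parameter. For $\|T(x)\|\ge r_*$ we have
\[
G(\|T(x)\|)\le (1-\alpha\sigma^2/r_*)^{-d/2}e^{\alpha^2\sigma^2/(2(1-\alpha\sigma^2/r_*))},
\]
which tends to $e^{\alpha^2\sigma^2/2}$ as $r_*\to\infty$. I choose $\alpha$ small, for instance $\alpha<C/\sigma^2$, so that $e^{\alpha^2\sigma^2/2-\alpha C}<1$. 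Then I choose $r_*$ large enough that
\[
\lambda:=e^{-\alpha C}(1-\alpha\sigma^2/r_*)^{-d/2}e^{\alpha^2\sigma^2/(2(1-\alpha\sigma^2/r_*))}<1 .
\]
The case $\|T(x)\|<r_*$ is still absorbed into the constant $b_1=e^{\alpha r_*}\mathbb E e^{\alpha\|\xi\|}$. I expect this interplay between the parameters to be the main obstacle: the $d$-dependent prefactor must be killed by the limit $r\to\infty$, not by taking $\alpha$ small.

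\emph{Compact region and conclusion.} For $\|x\|<R_0$, set $M:=\sup_{\|x\|\le R_0}\|T(x)\|<\infty$. Then $PV(x)\le e^{\alpha M}\mathbb E e^{\alpha\|\xi\|}=:b_2$. Taking $b=\max(b_1,b_2)$ gives $PV\le\lambda V+b$ on all of $\mathbb{R}^d$. If the form of assumption (b) with $\mathbf 1_K$ is wanted, I will enlarge the region so that $b\le(\lambda'-\lambda)V$ outside a compact set $K$. The constant $A$ from assumption (b) only rescales $V$.
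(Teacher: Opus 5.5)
Your proof is correct and follows essentially the same route as the paper: choose $\alpha$ so that $e^{-\alpha C+\alpha^2\sigma^2/2}<1$, then take $r_*$ large so that Lemma~\ref{lem:gauss-shift-d} gives $PV\le\lambda V$ when $\|x\|\ge R_0$ and $\|T(x)\|\ge r_*$. The remaining cases are handled by the bound $e^{\alpha r_*}\mathbb{E}e^{\alpha\|\xi\|}$ and by local boundedness. The only cosmetic difference is in the case $\|T(x)\|<r_*$: the paper absorbs it into $\lambda V(x)$ by taking $\|x\|\ge R_1$ large, which yields $PV\le\lambda V$ outside a ball, whereas you fold it directly into the additive constant $b$; this suffices for the stated inequality.
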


\begin{proof}
Fix $\alpha>0$ so small that
\begin{equation}\label{eq:alpha-choice-d}
q(\alpha):=\exp\!\Bigl(-\alpha C+\frac{\alpha^2\sigma^2}{2}\Bigr)<1.
\end{equation}
For instance, any $\alpha\in(0,2C/\sigma^2)$ satisfies \eqref{eq:alpha-choice-d}.
Choose $\lambda\in(q(\alpha),1)$.

\medskip
\noindent\textbf{Step 1 (estimate when $\|T(x)\|$ is large).}
Define for $r>\alpha\sigma^2$,
\[
F_\alpha(r):=
\Bigl(1-\frac{\alpha\sigma^2}{r}\Bigr)^{-d/2}
\exp\!\Bigl(\frac{\alpha^2\sigma^2}{2\bigl(1-\frac{\alpha\sigma^2}{r}\bigr)}\Bigr).
\]
By Lemma~\ref{lem:gauss-shift-d}, if $r=\|T(x)\|>\alpha\sigma^2$, then
\begin{equation}\label{eq:PV-lemma-d}
PV(x)=\mathbb{E}e^{\alpha\|T(x)+\xi\|}
\le e^{\alpha\|T(x)\|}\,F_\alpha(\|T(x)\|).
\end{equation}
Moreover $F_\alpha(r)\to e^{\alpha^2\sigma^2/2}$ as $r\to\infty$. Hence there exists $r_\ast>\alpha\sigma^2$ such that
\begin{equation}\label{eq:r-star-d}
e^{-\alpha C}\,\sup_{r\ge r_\ast}F_\alpha(r)\le \lambda.
\end{equation}
Now assume $\|x\|\ge R_0$ and $\|T(x)\|\ge r_\ast$. Using \eqref{eq:dissip-d}, \eqref{eq:PV-lemma-d}, and \eqref{eq:r-star-d},
\[
PV(x)\le e^{\alpha(\|x\|-C)}\sup_{r\ge r_\ast}F_\alpha(r)\le \lambda e^{\alpha\|x\|}=\lambda V(x).
\]

\medskip
\noindent\textbf{Step 2 (estimate when $\|T(x)\|$ is not large).}
If $\|T(x)\|\le r_\ast$, then $\|T(x)+\xi\|\le \|T(x)\|+\|\xi\|$ implies
\[
PV(x)=\mathbb{E}e^{\alpha\|T(x)+\xi\|}
\le e^{\alpha r_\ast}\,\mathbb{E}e^{\alpha\|\xi\|}
=: M_\alpha<\infty,
\]
where finiteness holds because $\xi$ is Gaussian and $\|\xi\|$ has finite exponential moments.
Since $V(x)=e^{\alpha\|x\|}\to\infty$ as $\|x\|\to\infty$, there exists $R_1\ge R_0$ such that
\begin{equation}\label{eq:R1-d}
\|x\|\ge R_1\quad\Longrightarrow\quad M_\alpha\le \lambda V(x).
\end{equation}
\medskip
\noindent\textbf{Step 3.} Combining Step~1 and Step~2, for all $\|x\|\ge R_1$ we have
\begin{equation}\label{eq:outside-d}
PV(x)\le \lambda V(x).
\end{equation}

\medskip
\noindent\textbf{Step 4 (Patching the drift on a compact set).}
Define
\[
b:=\sup_{\|x\|\le R_1} PV(x).
\]
This is finite by local boundedness \eqref{eq:local-bdd-d}: letting $M_{R_1}:=\sup_{\|x\|\le R_1}\|T(x)\|<\infty$,
\[
PV(x)=\mathbb{E}e^{\alpha\|T(x)+\xi\|}
\le e^{\alpha M_{R_1}}\,\mathbb{E}e^{\alpha\|\xi\|}<\infty,
\]
uniformly on $\{\|x\|\le R_1\}$. Finally, for $\|x\|\ge R_1$, \eqref{eq:outside-d} gives $PV(x)\le \lambda V(x)\le \lambda V(x)+b$,
while for $\|x\|\le R_1$ we have $PV(x)\le b\le \lambda V(x)+b$ since $V\ge 1$.
Thus \eqref{eq:FL-goal-d} holds for all $x\in\mathbb{R}^d$.
\end{proof}

\subsection{Extension to sub-exponential noise}
\label{extension}
A more general model for the noise is the sub-exponential model, which accounts for possibly heavier tails than the Gaussian noise. The noise in stochastic gradient methods is induced by the random selection of observations or mini-batches and therefore need not follow an exact Gaussian distribution. Moreover, individual stochastic gradients frequently involve products of random variables. Similar multiplicative structures arise in backpropagation through the products of activations, residuals, weights, and derivatives.
The sub-exponential assumption consequently allows for occasional large gradient deviations and accommodates a broader class of data distributions while retaining sufficiently strong concentration properties. such as Bernstein-type deviation inequalities.

We first establish the analogue of the Gaussian shift estimate that will be
used in the Foster--Lyapunov argument.

\begin{definition}
    A random vector in $\mathbb R^d$ is said to be sub-Gamma if there exists $b_\xi>0$ and $\nu>0$ such that for every $u\in\mathbb{R}^d$ with
$\|u\|\leq 1$, and every $s$ satisfying $|s|<1/b_\xi$,
\begin{equation}
\label{eq:uniform-subexp-mgf-d}
\mathbb{E}
\exp\bigl(s\langle u,\xi\rangle\bigr)
\leq
\exp(s^2\nu^2).
\end{equation}
\end{definition}

\begin{lemma}[Uniform large-shift estimate]
\label{lem:subexp-large-shift-d}
Let $\xi$ be a sub-Gamma random vector in
$\mathbb{R}^d$.
Then for $\alpha\in(0,1/(4b_\xi))$, 
\begin{equation}
\label{eq:large-shift-limit-d}
\limsup_{r\to\infty}\sup_{\|u\|=1}
\mathbb{E}
\exp\Bigl(
\alpha\bigl(\|ru+\xi\|-r\bigr)
\Bigr)
\leq
\sup_{\|u\|=1}
\mathbb{E}\exp\bigl(\alpha\langle u,\xi\rangle\bigr)
\leq
\exp(\alpha^2\nu^2).
\end{equation}
Moreover,
\begin{equation}
\label{eq:uniform-exp-moment-noise-d}
\mathbb{E}\exp\bigl(2\alpha\|\xi\|\bigr)
<\infty.
\end{equation}
\end{lemma}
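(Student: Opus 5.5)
To bound the quantity in \eqref{eq:large-shift-limit-d}, I will use the same scheme as in Lemma~\ref{lem:gauss-shift-d}: expand the norm to first order around the large shift, and control the quadratic remainder with the sub-Gamma moment generating function. Fix a unit vector $u$ and $r>0$. Concavity of the square root gives
\[
\|ru+\xi\|-r\le \langle u,\xi\rangle+\frac{\|\xi\|^2}{2r}.
\]
The term $\|\xi\|^2/(2r)$ is unbounded, so I will not use it globally. I will split on the event $E_r:=\{\|\xi\|\le \sqrt r\}$ and its complement.

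On $E_r$ the remainder is at most $1/2$, but I want it to vanish as $r\to\infty$. I will therefore use the threshold $\|\xi\|\le r^{1/3}$ instead, so that the remainder is at most $r^{-1/3}/2$. This gives
\[
\mathbb E\bigl[e^{\alpha(\|ru+\xi\|-r)}\mathbf 1_{E_r}\bigr]\le e^{\alpha r^{-1/3}/2}\,\mathbb E e^{\alpha\langle u,\xi\rangle}.
\]
The bound is uniform in $u$. Its $\limsup$ is at most $\sup_u\mathbb E e^{\alpha\langle u,\xi\rangle}$, and this is at most $e^{\alpha^2\nu^2}$ by \eqref{eq:uniform-subexp-mgf-d}, since $\alpha<1/b_\xi$.

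On the complement I will use the triangle inequality $\|ru+\xi\|-r\le\|\xi\|$ together with Cauchy--Schwarz:
\[
\mathbb E\bigl[e^{\alpha\|\xi\|}\mathbf 1_{\{\|\xi\|>r^{1/3}\}}\bigr]\le \bigl(\mathbb E e^{2\alpha\|\xi\|}\bigr)^{1/2}\,\mathbb P(\|\xi\|>r^{1/3})^{1/2}.
\]
This tends to $0$ uniformly in $u$, provided \eqref{eq:uniform-exp-moment-noise-d} holds. So the main work is proving \eqref{eq:uniform-exp-moment-noise-d}.

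For \eqref{eq:uniform-exp-moment-noise-d} I will reduce the norm to finitely many directions. Choose a finite $1/2$-net $N$ of the unit sphere, with cardinality at most $5^d$. Since
\[
\|\xi\|\le 2\max_{v\in N}\langle v,\xi\rangle,
\]
we get
\[
e^{2\alpha\|\xi\|}\le \sum_{v\in N}e^{4\alpha\langle v,\xi\rangle}.
\]
Each summand has finite expectation, bounded by $e^{16\alpha^2\nu^2}$, by \eqref{eq:uniform-subexp-mgf-d} with $s=4\alpha<1/b_\xi$. This is exactly where the hypothesis $\alpha<1/(4b_\xi)$ is used. Hence
\[
\mathbb E e^{2\alpha\|\xi\|}\le 5^d e^{16\alpha^2\nu^2}<\infty.
\]

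The main obstacle is making sure the net argument gives the factor $2$, which is what forces the constant $4$ in the allowed range of $\alpha$. To check it: for $w=\xi/\|\xi\|$, pick $v\in N$ with $\|w-v\|\le1/2$. Then $\langle v,\xi\rangle\ge\|\xi\|-\tfrac12\|\xi\|$, which gives $\|\xi\|\le 2\langle v,\xi\rangle\le 2\max_{v\in N}\langle v,\xi\rangle$. Everything else reduces to the two estimates above. The tail probability $\mathbb P(\|\xi\|>r^{1/3})\to0$ follows from Markov's inequality applied to $e^{2\alpha\|\xi\|}$.
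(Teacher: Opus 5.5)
Your proposal is correct and follows essentially the same route as the paper. The shared ingredients are the $1/2$-net bound $\mathbb{E}e^{2\alpha\|\xi\|}\le 5^d e^{16\alpha^2\nu^2}$ (using $4\alpha<1/b_\xi$), the concavity linearization $\|ru+\xi\|-r\le\langle u,\xi\rangle+\|\xi\|^2/(2r)$, and a truncation of $\|\xi\|$ with the triangle inequality on the large-noise event.

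The differences are cosmetic. You use the $r$-dependent cutoff $r^{1/3}$ and control the tail by Cauchy--Schwarz and Markov. The paper instead fixes a cutoff $K$, uses $e^{\alpha\|\xi\|}\mathbf{1}_{\{\|\xi\|>K\}}\le e^{-\alpha K}e^{2\alpha\|\xi\|}$, and lets $r\to\infty$ and then $K\to\infty$. It also bounds the two-sided difference $\bigl|\mathbb{E}e^{\alpha(\|ru+\xi\|-r)}-\mathbb{E}e^{\alpha\langle u,\xi\rangle}\bigr|$, where you bound only the upper side, which is all the statement needs.
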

\begin{proof}
We first prove \eqref{eq:uniform-exp-moment-noise-d}. Let
$\mathcal{N}_{1/2}$ be a $1/2$-net of the Euclidean unit sphere such that
\[
|\mathcal{N}_{1/2}|\leq 5^d.
\]
The standard net argument gives, for every $z\in\mathbb{R}^d$,
\[
\|z\|
\leq
2\max_{v\in\mathcal{N}_{1/2}}\langle v,z\rangle.
\]
Consequently,
\[
\exp\bigl(2\alpha\|z\|\bigr)
\leq
\sum_{v\in\mathcal{N}_{1/2}}
\exp\bigl(4\alpha\langle v,z\rangle\bigr).
\]
Since $4\alpha<1/b_\xi$, assumption
\eqref{eq:uniform-subexp-mgf-d} yields
\begin{align*}
\mathbb{E}\exp\bigl(2\alpha\|\xi\|\bigr)
&\leq
\sum_{v\in\mathcal{N}_{1/2}}
\mathbb{E}\exp\bigl(4\alpha\langle v,\xi\rangle\bigr)
\leq
5^d\exp(16\alpha^2\nu^2)
<\infty.
\end{align*}
We now prove the large-shift estimate \eqref{eq:large-shift-limit-d}. For every $r>0$, every unit vector
$u$, and every $z\in\mathbb{R}^d$, one has
\begin{equation}
\label{eq:norm-linearization-subexp-d}
0
\leq
\|ru+z\|-r-\langle u,z\rangle
\leq
\frac{\|z\|^2}{2r}.
\end{equation}
Indeed, the lower bound follows from
\[
\|ru+z\|
\geq
\langle u,ru+z\rangle
=
r+\langle u,z\rangle,
\]
whereas the upper bound follows from the concavity inequality for the
square root:
\begin{align*}
\|ru+z\|
&=
\sqrt{r^2+2r\langle u,z\rangle+\|z\|^2}
\leq
r+\langle u,z\rangle+\frac{\|z\|^2}{2r}.
\end{align*}
Fix $K>0$. On the event $\{\|\xi\|\leq K\}$,
\eqref{eq:norm-linearization-subexp-d} implies
\begin{align*}
0
&\leq
\exp\Bigl(
\alpha\bigl(\|ru+\xi\|-r\bigr)
\Bigr)
-
\exp\bigl(\alpha\langle u,\xi\rangle\bigr)
\leq
\exp(\alpha K)
\left[
\exp\left(\frac{\alpha K^2}{2r}\right)-1
\right].
\end{align*}
On the complementary event $\{\|\xi\|\leq K\}^c$, the triangle inequality gives
\[
\exp\Bigl(
\alpha\bigl(\|ru+\xi\|-r\bigr)
\Bigr)
\leq
\exp(\alpha\|\xi\|),
\]
and also
\[
\exp\bigl(\alpha\langle u,\xi\rangle\bigr)
\leq
\exp(\alpha\|\xi\|).
\]
It follows that
\begin{align*}
&\sup_{\|u\|=1}
\left|
\mathbb{E}
\exp\Bigl(
\alpha\bigl(\|ru+\xi\|-r\bigr)
\Bigr)
-
\mathbb{E}
\exp\bigl(\alpha\langle u,\xi\rangle\bigr)
\right|
\\
&\qquad\leq
\exp(\alpha K)
\left[
\exp\left(\frac{\alpha K^2}{2r}\right)-1
\right]
+
2
\mathbb{E}
\left[
\exp(\alpha\|\xi\|)
\mathbf{1}_{\{\|\xi\|>K\}}
\right].
\end{align*}
Using \eqref{eq:uniform-exp-moment-noise-d},
\begin{align*}
\mathbb{E}
\left[
\exp(\alpha\|\xi\|)
\mathbf{1}_{\{\|\xi\|>K\}}
\right]
&\leq
e^{-\alpha K}
\mathbb{E}\exp\bigl(2\alpha\|\xi\|\bigr).
\end{align*}
Therefore, by first letting $r\to\infty$ and then
$K\to\infty$, we obtain
\begin{align*}
\limsup_{r\to\infty}\mathbb{E}
\exp\Bigl(
\alpha\bigl(\|ru+\xi\|-r\bigr)
\Bigr)
&\leq
\sup_{\|u\|=1}
\mathbb{E}
\exp\bigl(\alpha\langle u,\xi\rangle\bigr)
\\
&\leq
\exp(\alpha^2\nu^2),
\end{align*}
where the last inequality follows from
\eqref{eq:uniform-subexp-mgf-d} with $s=\alpha$.
\end{proof}
We can now prove the exponential Foster--Lyapunov condition under the same
additive dissipativity assumption as in the Gaussian case.
\begin{proposition}
[Exponential Foster--Lyapunov condition for sub-exponential noise]
\label{prop:FL-exp-d-subgamma}
Assume that the following conditions hold.
\begin{enumerate}
    \item There exist $C>0$ and $R_0<\infty$ such that
    \begin{equation}
    \label{eq:dissip-subexp-d}
    \|T(x)\|
    \leq
    \|x\|-C,
    \qquad
    \|x\|\geq R_0.
    \end{equation}
    \item The map $T$ is locally bounded:
    \begin{equation}
    \label{eq:local-bdd-subexp-d}
    \sup_{\|x\|\leq R}\|T(x)\|<\infty,
    \qquad
    R<\infty.
    \end{equation}

    \item There exist $b_\xi>0$ and $\nu>0$ such that, for every
    $n\in\mathbb{N}^d$, every $u\in\mathbb{R}^d$ with $\|u\|\leq 1$,
    and every $s$ satisfying $|s|<1/b_\xi$,
    \begin{equation}
    \label{eq:uniform-subexp-d}
    \mathbb{E}
    \exp\bigl(s\langle u,\xi\rangle\bigr)
    \leq
    \exp(s^2\nu^2).
    \end{equation}
\end{enumerate}
Then there exist $\alpha>0$ with $0<\alpha<
\min\left\{
\frac{1}{4b_\xi},
\frac{C}{\nu^2}
\right\}$,
$\lambda\in(0,1)$ and $B<\infty$ such that,
for
\[
V(x):=\exp(\alpha\|x\|),
\]
one has
\begin{equation}
\label{eq:FL-subexp-goal-d}
PV(x)
\leq
\lambda V(x)+B,
\qquad
x\in\mathbb{R}^d.
\end{equation}
\end{proposition}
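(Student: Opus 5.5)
The plan is to mirror the proof of Proposition~\ref{prop:FL-exp-d}, with Lemma~\ref{lem:subexp-large-shift-d} playing the role of the Gaussian shift estimate Lemma~\ref{lem:gauss-shift-d}. The first step is to fix $\alpha$ with $0<\alpha<\min\{1/(4b_\xi),\,C/\nu^2\}$. The upper bound $\alpha<C/\nu^2$ is exactly what gives $q(\alpha):=\exp(-\alpha C+\alpha^2\nu^2)<1$, since $-\alpha C+\alpha^2\nu^2=\alpha(\alpha\nu^2-C)<0$. Then choose $\lambda\in(q(\alpha),1)$.

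\emph{Step 1: large $\|T(x)\|$.} Write $T(x)=ru$ with $r=\|T(x)\|$ and $\|u\|=1$. Then
\[
PV(x)=e^{\alpha r}\,\mathbb E\exp\bigl(\alpha(\|ru+\xi\|-r)\bigr).
\]
Since the supremum over $u$ is taken inside the limsup in \eqref{eq:large-shift-limit-d}, for any $\eta>0$ there is $r_\ast$ such that, for all $r\ge r_\ast$ and all unit $u$,
\[
\mathbb E\exp\bigl(\alpha(\|ru+\xi\|-r)\bigr)\le e^{\alpha^2\nu^2}(1+\eta).
\]
Choose $\eta$ so that $e^{-\alpha C}e^{\alpha^2\nu^2}(1+\eta)\le\lambda$. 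Then for $\|x\|\ge R_0$ and $\|T(x)\|\ge r_\ast$, the dissipativity \eqref{eq:dissip-subexp-d} gives $e^{\alpha r}\le e^{\alpha(\|x\|-C)}$, hence $PV(x)\le\lambda V(x)$.

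\emph{Step 2: bounded $\|T(x)\|$.} If $\|T(x)\|\le r_\ast$, the triangle inequality gives
\[
PV(x)\le e^{\alpha r_\ast}\,\mathbb E e^{\alpha\|\xi\|}=:M_\alpha .
\]
$M_\alpha$ is finite by \eqref{eq:uniform-exp-moment-noise-d} and Jensen's inequality, since $\mathbb E e^{\alpha\|\xi\|}\le(\mathbb E e^{2\alpha\|\xi\|})^{1/2}$. Pick $R_1\ge R_0$ with $M_\alpha\le\lambda e^{\alpha R_1}$. Combining with Step 1, for $\|x\|\ge R_1$ either case yields $PV(x)\le\lambda V(x)$.

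\emph{Step 3: the compact set.} On $\{\|x\|\le R_1\}$, local boundedness \eqref{eq:local-bdd-subexp-d} gives $M_{R_1}:=\sup_{\|x\|\le R_1}\|T(x)\|<\infty$. Hence
\[
PV(x)\le e^{\alpha M_{R_1}}\,\mathbb E e^{\alpha\|\xi\|}=:B<\infty .
\]
Since $V\ge1>0$, the bound $PV\le\lambda V+B$ then holds everywhere.

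The only delicate point is Step 1. Lemma~\ref{lem:subexp-large-shift-d} is stated as a limsup, so it has to be converted into a bound that is uniform in the direction $u$ for all large $r$. Because the supremum over $\|u\|=1$ is already inside the limsup in \eqref{eq:large-shift-limit-d}, this conversion is immediate. The margin $\eta$ is available precisely because $\lambda>q(\alpha)$ strictly. Everything else is routine bookkeeping identical to the Gaussian case.
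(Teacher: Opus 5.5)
Your proposal is correct and follows the paper's proof essentially step for step. Both arguments fix $\alpha<\min\{1/(4b_\xi),C/\nu^2\}$ so that $q(\alpha)<1$ and take $\lambda\in(q(\alpha),1)$. Both use the large-shift lemma when $\|T(x)\|\ge r_\ast$, bound $PV(x)\le e^{\alpha r_\ast}\mathbb E e^{\alpha\|\xi\|}$ when $\|T(x)\|\le r_\ast$ (absorbed into $\lambda V$ beyond some $R_1$), and use local boundedness on $\{\|x\|\le R_1\}$. Your explicit $\eta$-margin (giving a bound uniform in the direction $u$) and the Jensen bound for $\mathbb E e^{\alpha\|\xi\|}$ only spell out two points the paper states more briefly.
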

\begin{proof}
Fix $\alpha>0$ as prescribed. Then
\begin{equation}
\label{eq:q-alpha-subexp-d}
q(\alpha)
:=
\exp\bigl(-\alpha C+\alpha^2\nu^2\bigr)
<1.
\end{equation}
Choose $\lambda\in(q(\alpha),1)$.

\medskip
\noindent
\textbf{Step 1 (estimate when $\|T(x)\|$ is large).}
By
\eqref{eq:large-shift-limit-d},
\[
\limsup_{r\to\infty}
e^{-\alpha C}\mathbb{E}
\exp\Bigl(
\alpha\bigl(\|ru+\xi\|-r\bigr)
\Bigr)
\leq
\exp\bigl(-\alpha C+\alpha^2\nu^2\bigr)
=
q(\alpha)
<
\lambda.
\]
Consequently, there exists $r_\ast>0$ such that
\begin{equation}
\label{eq:r-star-subexp-d}
e^{-\alpha C}
\sup_{r\geq r_\ast}\mathbb{E}
\exp\Bigl(
\alpha\bigl(\|ru+\xi\|-r\bigr)
\Bigr)
\leq
\lambda.
\end{equation}
Suppose that $\|x\|\geq R_0$ and $\|T(x)\|\geq r_\ast$. Write
\[
T(x)=\|T(x)\|u_x,
\qquad
\|u_x\|=1.
\]
Then,
\begin{align*}
PV(x)
&=
\mathbb{E}
\exp\bigl(\alpha\|T(x)+\xi\|\bigr)
\\
&\leq
\exp\bigl(\alpha\|T(x)\|\bigr)
\ \mathbb{E}
\exp\Bigl(
\alpha\bigl(\|ru+\xi\|-r\bigr)
\Bigr).
\end{align*}
Using \eqref{eq:dissip-subexp-d} and
\eqref{eq:r-star-subexp-d}, we obtain
\begin{align*}
PV(x)
&\leq
\exp\bigl(\alpha(\|x\|-C)\bigr)
\sup_{r\geq r_\ast}\ \mathbb{E}
\exp\Bigl(
\alpha\bigl(\|ru+\xi\|-r\bigr)
\Bigr)
\\
&\leq
\lambda\exp(\alpha\|x\|)
=
\lambda V(x).
\end{align*}

\medskip
\noindent
\textbf{Step 2 (estimate when $\|T(x)\|$ is not large).}
Suppose that $\|T(x)\|\leq r_\ast$. By the triangle inequality,
\begin{align*}
PV(x)
&=
\mathbb{E}
\exp\bigl(\alpha\|T(x)+\xi\|\bigr)
\\
&\leq
\exp(\alpha r_\ast)
\mathbb{E}\exp(\alpha\|\xi\|).
\end{align*}
Lemma~\ref{lem:subexp-large-shift-d} gives
\[
M_\alpha
:=
\exp(\alpha r_\ast)
\ \mathbb{E}\exp(\alpha\|\xi\|)
<\infty.
\]
Hence
\[
PV(x)\leq M_\alpha
\qquad
\text{whenever }
\|T(x)\|\leq r_\ast.
\]
Since $V(x)=\exp(\alpha\|x\|)\to\infty$ as $\|x\|\to\infty$, there
exists $R_1\geq R_0$ such that
\begin{equation}
\label{eq:R1-subexp-d}
\|x\|\geq R_1
\quad\Longrightarrow\quad
M_\alpha\leq\lambda V(x).
\end{equation}
\medskip
\noindent
\textbf{Step 3.}
Let $\|x\|\geq R_1$. If $\|T(x)\|\geq r_\ast$, Step~1 gives
\[
PV(x)\leq\lambda V(x).
\]
If $\|T(x)\|\leq r_\ast$, Step~2 and
\eqref{eq:R1-subexp-d} give the same inequality. Thus
\begin{equation}
\label{eq:outside-subexp-d}
PV(x)
\leq
\lambda V(x),
\qquad
\|x\|\geq R_1.
\end{equation}
\medskip
\noindent
\textbf{Step 4.}
Set
\[
M_{R_1}
:=
\sup_{\|x\|\leq R_1}\|T(x)\|.
\]
By local boundedness, $M_{R_1}<\infty$. Therefore, for
$\|x\|\leq R_1$,
\begin{align*}
PV(x)
&\leq
\exp(\alpha M_{R_1})\ 
\mathbb{E}\exp(\alpha\|\xi\|)
\end{align*}
Define
\[
B
:=
\exp(\alpha M_{R_1})
\ \mathbb{E}\exp(\alpha\|\xi\|)
<\infty.
\]
Thus, the claim is proved after noticing that for $\|x\|\geq R_1$, \eqref{eq:outside-subexp-d} gives
\[
PV(x)\leq\lambda V(x)\leq\lambda V(x)+B.
\]
and for  $\|x\|\leq R_1$, the definition of $B$ gives
\[
PV(x)\leq B\leq\lambda V(x)+B.
\]
Thus
\[
PV(x)\leq\lambda V(x)+B,
\qquad
x\in\mathbb{R}^d,
\]
which proves \eqref{eq:FL-subexp-goal-d}.
\end{proof}

\section*{Acknowledgement}

S.G.  acknowledges the MIUR Excellence Department Project awarded to the
Department of Mathematics, University of Pisa, CUP I57G22000700001.
S.G. also thanks Université Lumière Lyon 2 for hospitality during the research.

%

\bibliographystyle{plain}
\bibliography{Biblio}

\end{document}